\pgfplotsset{compat = newest}
\newcommand{\RR}{\mathbb{R}}
\newcommand{\E}{\mathbb{E}}
\newcommand{\Bcal}{\mathcal{B}}
\newcommand{\Ccal}{\mathcal{C}}
\newcommand{\Fcal}{\mathcal{F}}
\newcommand{\Mcal}{\mathcal{M}}
\newcommand{\Ncal}{\mathcal{N}}
\newcommand{\Rcal}{\mathcal{R}}
\newcommand{\Scal}{\mathcal{S}}
\newcommand{\Vcal}{\mathcal{V}}
\newcommand{\NA}{\mathtt{NA}}
\newcommand{\tmu}{\widetilde{\mu}}
\newcommand{\br}[1]{\left(#1\right)}
\newcommand{\sqb}[1]{\left[#1\right]}
\newcommand{\cbr}[1]{\left\{#1\right\}}
\newcommand{\bbr}[1]{\left\llbracket#1\right\rrbracket}
\newcommand{\nm}[1]{\left\Vert\, #1 \,\right\Vert}
\newcommand{\abs}[1]{\left\vert\, #1 \,\right\vert}
\theoremstyle{plain}
\newtheorem{lemma}{Lemma}[section]
\newtheorem{definition}{Definition}
\def\toptitlebar{\hrule height4pt \vskip .25in}
\def\bottomtitlebar{ \vskip .25in \hrule height1pt \vskip .25in}
\def\mytitle{What's a good imputation to predict with missing values?}
\title{\mytitle}
\author{Marine Le Morvan\textsuperscript{\textnormal{1,2}}\hspace{0.5em}
Julie Josse\textsuperscript{\textnormal{4}} \hspace{0.5em}
Erwan Scornet\textsuperscript{\textnormal{3}}\hspace{0.5em}
Gaël Varoquaux\textsuperscript{\textnormal{1}}
\AND
\\[-1.5em]
\textsuperscript{1}  Université Paris-Saclay, Inria, CEA, Palaiseau, 91120, France\\
\textsuperscript{2}  Université Paris-Saclay, CNRS/IN2P3, IJCLab, 91405 Orsay, France\\
\textsuperscript{3}  CMAP, UMR7641, Ecole Polytechnique, IP Paris, 91128 Palaiseau, France\\
\textsuperscript{4}  Inria Sophia-Antipolis, Montpellier, France\\
\AND
\\[-1.5em]
\texttt{\{marine.le-morvan, julie.josse, gael.varoquaux\}@inria.fr}\\
\texttt{erwan.scornet@polytechnique.edu}
}
\begin{document}

\maketitle

\begin{abstract}
    How to learn a good predictor on data with missing values? Most
    efforts focus on first imputing as well as possible
    and second learning on the completed data to predict the outcome.
    Yet, this widespread practice has no theoretical grounding.
    Here we show that for almost all imputation functions, an
    impute-then-regress procedure with a powerful learner is Bayes optimal. This result holds for all missing-values mechanisms, in
    contrast with the classic statistical results that require
    missing-at-random settings to use imputation in probabilistic
    modeling.
    Moreover, it implies that perfect conditional imputation is not needed for good prediction asymptotically. In fact, we show that on perfectly imputed data the best regression function will generally be discontinuous, which makes it hard to learn.
    Crafting instead the
    imputation so as to leave the regression function unchanged
    simply shifts the problem to learning discontinuous imputations.
    Rather, we suggest that it is easier to learn imputation and
    regression jointly. We propose such a procedure, adapting NeuMiss, a
    neural network capturing the conditional links across observed and
    unobserved variables whatever the missing-value pattern.
    Experiments confirm
    that joint imputation and regression through NeuMiss is better than various
    two step procedures in our experiments with finite number of samples. 
\end{abstract}

\section{Introduction}

Data with missing values are ubiquitous in many applications, as in health or business: some observations come with missing features. There is a rich statistical literature on imputation as well as inference with missing values \citep{rubin1976inference,little2002statistical}. Most of the theory and practices build upon the \emph{Missing At Random} (MAR) assumption that allows to maximize the likelihood of observed data while ignoring the missing-values mechanism, for instance using expectation maximization \citep{dempster1977maximum}. On the contrary, Missing Not At Random settings, where missingness depends on the unobserved values, may not be identifiable and require dedicated methods with a model of the missing-values mechanism.

Learning predictive models with missing values poses distinct challenges compared to inference tasks \citep{josse2019consistency}. Indeed, when the input is an arbitrary subset of variables in dimension $d$, there are $2^d$ potential missing data patterns and as many sub-models to learn. Consequently, even simple data-generating mechanisms lead to complex decision rules \citep{LeMorvan2020Linear}. To date, there are few supervised-learning models natively suited for partially-observed data. A notable exception is found with tree-based models \citep{Twala2008_mia,chen2016xgboost}, widely used in data-science practice. 

The most common practice however remains by far to use off-the-shelf methods first for imputation of missing values and second for supervised-learning on the resulting completed data. Such a procedure may benefit from progress in missing-value imputation with machine learning [\citealt{VanBuuren2018Flexible}, \mbox{\citealt{yoon2018gain}}, \citealt{Mattei2019MIWAE}]. 
However, there is a lack of learning theory to support such Impute-then-Regress procedures: Under what conditions are they Bayes consistent? Which aspects of the imputation are important?

There is empirical realization that the choice of imputation matters for predictive performance. The NADIA R package \citep{nadia_R} can select an imputation method to minimize a prediction error on a test set.
Auto-ML is used to optimize full pipelines, including imputation \cite[eg][]{jarrett2021clairvoyance}.
\citet{ipsen2020deal} optimize a constant imputation for supervised learning. However, the imputation is only weakly guided by the target in these approaches, it is set either from a family of black-box methods using gradient-free model selection, or from trivial imputation functions. In addition, there is a lack of insight on what drives a good imputation for prediction.

We contribute a systematic analysis of Impute-the-Regress procedures in a general setting: non-linear response function and any missingness mechanism (no MAR assumptions).
We show that:
\begin{itemize}[leftmargin=1.5em, itemsep=.5ex, topsep=-.5ex, parsep=.5ex, partopsep=-.5ex]
    \item Impute-then-Regress procedures are Bayes optimal for \emph{all missing data mechanisms} and for \emph{almost all imputation functions}, whatever the number of variables that may be missing. This very general result gives theoretical grounding to such widespread procedures.
    
    \item We study ``natural'' choices of imputation and regression functions: the oracle imputation by the conditional expectation and oracle regression function on the complete data. We show that chaining these oracles is not Bayes optimal in general and quantify its excess risk. We show that in both cases, choosing an oracle for one step, imputation or regression, imposes discontinuities on the other step, thus making it harder to learn.
    
    \item As these results suggest that imputation and regression should be adapted to one another, we contribute a method that jointly optimizes imputation and regression, using NeuMiss networks \citep{LeMorvan2020NeuMiss} as a differentiable imputation procedure.
    
    \item We compare empirically a number of Impute-then-Regress procedures on simulated non-linear regression tasks. Joint optimization of both steps provides the best performance.

\end{itemize}

\section{Problem setting}
% $\Dcal_n = \cbr{(\bm x^{(i)}, \bm m^{(i)}, y^{(i)})}_{i=1..n}$
\paragraph{Notations} We consider a dataset of i.i.d.\ realizations of the random variable $(X, M, Y) \in \RR^d \times \cbr{0, 1}^d \times \RR$ where $X$ are the complete covariates, $M$ a missingness indicator, and $Y$ a response of interest. For each realization $(x, m, y)$, $m_j=1$ indicates that $x_j$ is missing, and $m_j=0$ that it is observed. We denote by $mis(m) \subset \bbr{1, d}$ the indices corresponding to the missing covariates (and similarly $obs(m)$ the observed indices), so that $x_{obs(m)}$ corresponds to the entries actually observed. We define the incomplete covariate vector $\widetilde{X} \in \br{\RR \cup \br{\NA}}^d$ as $\widetilde{X}_j = X_j$ if $M_j = 0$ and $\widetilde{X}_j = \NA$ otherwise, where $\NA$ represents a missing value.

\paragraph{Assumptions} We assume that $X$ admits a density on $\RR^d$ and that, for all $j \in \bbr{1, d},$ each component $X_j$ has finite expectation and variance, that is $ \E\sqb{X_j^2}<\infty$.  Moreover, we assume that the response $Y$ is generated according to:
\begin{equation}
    \label{eq:generation}
    Y = f^\star(X) + \epsilon, \qquad \textrm{with}~~\E\sqb{\epsilon|X_{obs(M)}, M} = 0 \quad \textrm{and}~~\E\sqb{Y^2} < \infty.
\end{equation}%
where $f^\star: \RR^d \to \RR$ is a function of the complete input data $X$, $\epsilon \in \RR$ is a random noise variable.

\subsection{Supervised learning with missing values}
\paragraph{Optimization problem} In practice, in the presence of missing values, we do not have access to the complete data $(X, M, Y)$ but only to the subset of it that is observed, i.e, $(X_{obs(M)}, M, Y)$. Thus instead of learning a mapping from $\RR^d$ to $\RR$, we need to learn a mapping from $\br{\RR \cup \br{\NA}}^d$ to $\RR$, where the set of observed covariates can be any subset of $\bbr{1, d}$. It is this unusual input space, partly discrete, that makes supervised learning with missing values challenging and different from classical supervised learning problems. Formally, the optimization problem we wish to solve is:
\begin{equation}
    \label{eq:minimization_pb}
    \min_{f: \br{\RR \cup \br{\NA}}^d \mapsto \RR} \Rcal(f):= \E\sqb{\br{Y - f(\widetilde X)}^2}
\end{equation}

\paragraph{Bayes predictor} The function which minimizes \eqref{eq:minimization_pb}, called the \emph{Bayes predictor}, is given by:
\begin{equation}
    \tilde{f}^{\star}(\widetilde X) = \E \sqb{Y | X_{obs(M)}, M}
     = \E \sqb{f^\star(X) | X_{obs(M)}, M}. \label{eq:BP}
\end{equation}

As $\widetilde X$ is a function of $X_{obs}$ and $M$, we will sometimes slightly abuse notations and write $\tilde f^\star(\widetilde X) = \tilde f^\star(X_{obs}, M)$. The risk of the Bayes predictor is called the \emph{Bayes risk}, which we denote as $\Rcal^{\star}$. It is the lowest achievable risk for a given supervised learning problem.

\begin{definition}[Bayes optimality]
A \emph{Bayes optimal} function $f$ achieves the Bayes rate, i.e, $\Rcal(f) = \Rcal^{\star}$.
\end{definition}

As can be seen from \eqref{eq:BP}, the Bayes predictor is a function of $M$, a discrete random variable that can take one of $2^d$ values since $M \in \cbr{0, 1}^d$. The function  $\tilde{f}^{\star}$ can thus be viewed as $2^d$ different functions, one for each possible subset of variables. This view raises questions that are central to this paper: How should we parametrize functions on such input domains? And which function families should we consider to approximate $\tilde{f}^{\star}$? These questions have been studied in the case where $f^\star$ is assumed to be a linear function, and $X$ follows a Gaussian distribution. Indeed, under these assumptions, \citet{LeMorvan2020Linear,LeMorvan2020NeuMiss} have derived analytical expressions for the Bayes predictor and deduced appropriate parametric estimators. However, aside from specific cases, it is impossible to derive an analytical expression for the Bayes predictor, and novel arguments are needed to understand which classes of functions should be considered in general.

\section{Asymptotic analysis of Impute-then-regress procedures}

\subsection{Impute-then-regress procedures}
Let $|mis(m)|$ (resp. $|obs(m)|$) be the number of missing entries (resp. observed) for any missing data pattern $m$. For each $m \in \{0,1\}^d$, we define an \emph{imputation function} $\phi^{(m)}: \RR^{|obs(m)|} \to \RR^{|mis(m)|}$ which outputs values for the missing entries based on the observed ones. We denote by $\phi_j^{(m)}: \RR^{|obs(m)|} \to \RR$ the component function of $\phi^{(m)}$ that imputes the $j$-th component in $X$ if it is missing. Classical choices of imputation functions include constant functions or linear functions. Finally, we introduce the family of functions $\Fcal^I$ that transform an incomplete vector into a complete one, precisely:
\begin{equation}
    \label{eq:imputation}
    \Fcal^I = \cbr{\Phi: \br{\RR \cup \cbr{\NA}}^d \to \RR^d: \forall j \in \bbr{1, d},\: \Phi_j(\widetilde X) =
    \begin{cases}
        X_j \: &\text{if} \: M_j = 0\\
        \phi^{(M)}_j(X_{obs(M)}) \: &\text{if} \: M_j = 1
    \end{cases}
    }.
\end{equation}
Let us define $\Fcal^I_{\infty}$ in the exact same way but for imputation functions $\phi^{(m)} \in \Ccal^{\infty}$, for all $m \in \{0,1\}^d$. Here we study \emph{Impute-then-regress procedures}, which we define as two-step procedures where the data is first imputed using a function $\Phi \in \Fcal^I$, and then a regression is performed on the imputed data. Such a procedure is quite natural to deal with arbitrary subsets of inputs variables. It embeds the data into $\RR^d$ to reduce the problem to a classical one. In practice, impute-then-regress procedures are widely used. However, the choice of function class is so far mostly ad-hoc and raises multiple questions: How close to the Bayes rate can functions obtained via such procedures be? Should we prefer some choices of imputation functions over others? What happens when the missing data mechanism is missing not at random? In this section, we will give answers to these questions.

Below, we write $obs$ (resp.~$mis$) instead of $obs(M)$ (resp.~$mis(M)$) to lighten notations.

\subsection{Impute-then-regress procedures are Bayes optimal}

\begin{definition}[Universal consistency]
    An estimator $f_n$ is \emph{Bayes consistent} if $\lim_{n \to \infty}\Rcal(f_n) =  \Rcal^{\star}$. It is said to be \emph{universally consistent} if the previous statement holds for all distributions of $(X,Y)$.
\end{definition}

The following theorem shows that Impute-then-regress procedures are Bayes optimal for almost all imputation functions. In other words, it means that a universal learner trained on imputed data provides optimal performances asymptotically for almost all imputation functions. Let us now define, for all imputation functions $\Phi \in \Fcal^I$, the function $g^\star_{\Phi} \in \underset{g: \RR^d \mapsto \RR}{\text{argmin}} \quad \E\sqb{\br{Y-g \circ \Phi(\widetilde X)}^2}$.

\begin{restatable}[Bayes consistency of Impute-then-regress procedures]{theorem}{fbp}
\label{th:fbp}
    Assume the data is generated according to \eqref{eq:generation}.
    Then, for almost all imputation function $\Phi \in \Fcal^I_{\infty}$, the function $g^\star_{\Phi} \circ \Phi$ is Bayes optimal.
    %Let $\Phi \in \Fcal^I$ and infinitely differentiable. 
    In other words, for almost all imputation functions $\Phi\in \Fcal^I_{\infty}$, a universally consistent algorithm trained on the imputed data $\Phi(\widetilde X)$ is Bayes consistent.
    
\end{restatable}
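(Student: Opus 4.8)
The plan is to first reduce Bayes optimality to a statement about the fibers of $\Phi$, and then to establish that statement through a dimension-counting argument that holds generically in $\Phi$. For the reduction, I would use that the minimizer of $\E[(Y-g\circ\Phi(\widetilde X))^2]$ is the regression function $g^\star_\Phi(z)=\E[Y\mid \Phi(\widetilde X)=z]$, so $g^\star_\Phi\circ\Phi=\E[Y\mid \Phi(\widetilde X)]$. Writing $Z:=\Phi(\widetilde X)$, since $Z$ is a deterministic function of $(X_{obs},M)$ we have $\sigma(Z)\subseteq\sigma(X_{obs},M)$, so the tower rule gives $g^\star_\Phi\circ\Phi=\E[\tilde f^\star(\widetilde X)\mid Z]$. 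Because $\tilde f^\star$ is the $L^2$-projection onto $\sigma(X_{obs},M)$, the equality $\Rcal(g^\star_\Phi\circ\Phi)=\Rcal^\star$ is equivalent to $g^\star_\Phi\circ\Phi=\tilde f^\star$ almost surely, i.e.\ to $\tilde f^\star$ being $\sigma(Z)$-measurable, i.e.\ to $\tilde f^\star$ being a.s.\ constant along the fibers of $\Phi$. It therefore suffices to show that for almost all $\Phi$, conditioning on $Z$ recovers $(X_{obs},M)$ up to a $\P_{\widetilde X}$-null set.

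Next I would describe the pushforward law of $Z$ by decomposing over patterns. For fixed $M=m$, the map $x_{obs}\mapsto\Phi(x_{obs},m)$ keeps $x_{obs}$ on the observed coordinates and writes $\phi^{(m)}(x_{obs})$ on the missing ones; it is injective in $x_{obs}$, and (using $\phi^{(m)}\in\Ccal^\infty$) its image is the graph $G_m$, a smooth submanifold of $\RR^d$ of dimension $|obs(m)|$. Hence the law of $Z$ is the mixture $\sum_m\P(M=m)\,\nu_m$, where $\nu_m$ is the pushforward of $\P_{X_{obs}\mid M=m}$. Since $X$ has a density, $\nu_0$ is absolutely continuous on $\RR^d$, while each $\nu_m$ with $m\neq0$ is absolutely continuous with respect to the $|obs(m)|$-dimensional Hausdorff measure on $G_m$, hence singular. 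Within a single pattern there are no collisions, so the only obstruction to reconstructing $(X_{obs},M)$ from $Z$ is overlap between distinct graphs.

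The crux is a genericity argument ensuring these components are essentially mutually disjoint. For $m\neq0$, absolute continuity of $\nu_0$ gives $\nu_0(G_m)=0$ since $G_m$ is Lebesgue-null; this needs no assumption on $\Phi$. For a pair $m\neq m'$ with $m,m'\neq0$, I would invoke that for almost all $\phi\in\Ccal^\infty$ the smooth graphs $G_m,G_{m'}$ meet transversally, so $\dim(G_m\cap G_{m'})=|obs(m)|+|obs(m')|-d<\min(|obs(m)|,|obs(m')|)$ because $|obs(m)|,|obs(m')|<d$; hence $\mathcal{H}^{|obs(m)|}(G_m\cap G_{m'})=0$ and $\nu_m(G_{m'})=0$. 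As there are finitely many patterns, the measures $\{\nu_m\}_{m\neq0}$ together with $\nu_0$ are pairwise mutually singular, so $M$ is an a.s.\ deterministic function of $Z$: at a $\nu_m$-typical $z\in G_m$ the source is pattern $m$ with $x_{obs}=z_{obs(m)}$, giving $\E[Y\mid Z=z]=\tilde f^\star(z_{obs(m)},m)$, and at Lebesgue-typical $z$ the source is pattern $0$, giving $\E[Y\mid Z=z]=f^\star(z)=\tilde f^\star(z,0)$. Therefore $\E[Y\mid Z]=\tilde f^\star$ almost surely, i.e.\ $g^\star_\Phi\circ\Phi$ is Bayes optimal.

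The hard part will be twofold. First, making the transversality step match the intended meaning of ``almost all'' in the infinite-dimensional space $\Fcal^I_\infty$: one must either fix a suitable finite-dimensional parametrization and rule out the bad parameters via Sard's theorem or a parametric transversality theorem, or adopt a prevalence (or residual) notion of genericity and check that transversal configurations form a prevalent set. Second, the mixed-dimensional disintegration deserves care: one has to justify rigorously that conditioning on $Z$ concentrates on the unique graph $G_{m}$ that a typical $z$ belongs to, controlling the Radon--Nikodym densities of the $\nu_m$ along graphs of differing dimension so that the absolutely continuous and lower-dimensional components genuinely drop out of $\E[Y\mid Z=z]$.
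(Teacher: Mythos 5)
Your proposal is correct, and it runs on the same geometric engine as the paper's proof: after imputation, each pattern $m$ lands on the graph $G_m$ of $\phi^{(m)}$, a smooth $|obs(m)|$-dimensional submanifold of $\RR^d$ (the paper proves this via the preimage theorem, its Lemma~A.1), and Thom's transversality theorem makes pairwise transversality of these graphs a generic property, so dimension counting renders the overlaps negligible. Where you genuinely differ is in how the conclusion is assembled. The paper is constructive and never conditions on $Z=\Phi(\widetilde X)$: it orders the manifolds from smallest to largest dimension, defines $g^\star$ recursively on $\Mcal^{(m(i))}\setminus\bigcup_{k<i}\Mcal^{(m(k))}$ to equal the pattern-$m(i)$ Bayes predictor, and checks pattern by pattern that the excess risk integrates to zero. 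You instead identify $g^\star_\Phi\circ\Phi=\E[Y\mid Z]$, reduce Bayes optimality to $\sigma(Z)$-measurability of $\tilde f^\star$, and obtain it from pairwise mutual singularity of the pushforward measures $\nu_m$; this is arguably cleaner conceptually (it isolates identifiability of the pattern from $Z$ as the only thing that matters, and dispenses with the ordering bookkeeping) at the price of disintegration language. Your two flagged ``hard parts'' are both less hard than you fear. First, the meaning of ``almost all'' is settled exactly as you guess in your first alternative's spirit: Thom's theorem directly yields a residual set in the $\Ccal^\infty$ topology, and the paper explicitly adopts this topological sense of genericity — no prevalence notion or parametric Sard argument is needed. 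Second, the mixed-dimensional disintegration dissolves: pairwise mutual singularity of the $\nu_m$, which you already establish, by itself gives $P(M=m\mid Z)\in\{0,1\}$ almost surely, with no control of Radon--Nikodym densities across dimensions required. Note also that for pairs of \emph{unequal} dimension you do not even need transversality: $G_m\cap G_{m'}$ sits inside the lower-dimensional graph, hence has Hausdorff dimension strictly below $\dim G_{m'}$ and is automatically $\nu_{m'}$-null, so taking $S=G_m$ as a carrying set gives mutual singularity for free; this is precisely why the paper's Lemma~A.2 only treats pairs with equally many missing values, and your invocation of transversality for all pairs, while generically valid, is unnecessary for the unequal-dimension ones.
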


Appendix~\ref{ss:bayes_optimality} gives the proof. Theorem~\ref{th:fbp} states a very general result: Impute-then-regress procedures are Bayes consistent for all missing data mechanisms, almost all imputation functions, regardless of the distribution of $(X,Y)$ and the number of missing covariates. Since Theorem~\ref{th:fbp} holds for almost all imputation functions, it implies that good imputations are not required to obtain good predictive performances, at least asymptotically. Note that here, the notion of \emph{almost all} is to be understood in its topological sense, and not in its measure theory sense. Moreover, this theorem does not make any assumption on the missing data mechanism, and is therefore valid for Missing Not At Random (MNAR) data. This contrasts with most methods for inference and imputation with missing values, valid only for MAR data. Finally, the theorem remains valid for any configuration of variables that may contain missing values, including the case in which all variables may contain missing values. 
Bayes consistency of Impute-the-Regress procedures has already been
studied, but in much more restricted settings.
\citet{josse2019consistency} proved that such procedures are Bayes consistent under the MAR assumption, for constant imputations functions and for only one potentially missing variable. 
\citet{Bertsimas2021Prediction} refined this result to almost surely continuous imputation functions. While these two prior works build on very similar proofs, we use here very different arguments summarized in the next paragraph.

\begin{figure}[b]
    \begin{minipage}{.52\linewidth}
    \caption{\textbf{Example - Imputation manifolds in three dimensions} --- 3-dimensional Gaussian data after imputation. Data points are colored according to their missing data pattern prior to imputation. Red, brown and purple (resp. orange, blue, and green) correspond to missing data patterns with two (resp. one) missing value(s). Completely observed points are not represented to ease the visualization of manifolds.}
    \label{fig:example_manifolds}
    \end{minipage}%
    \hspace{2em}
    \begin{minipage}{.46\linewidth}
        \includegraphics[scale=0.54]{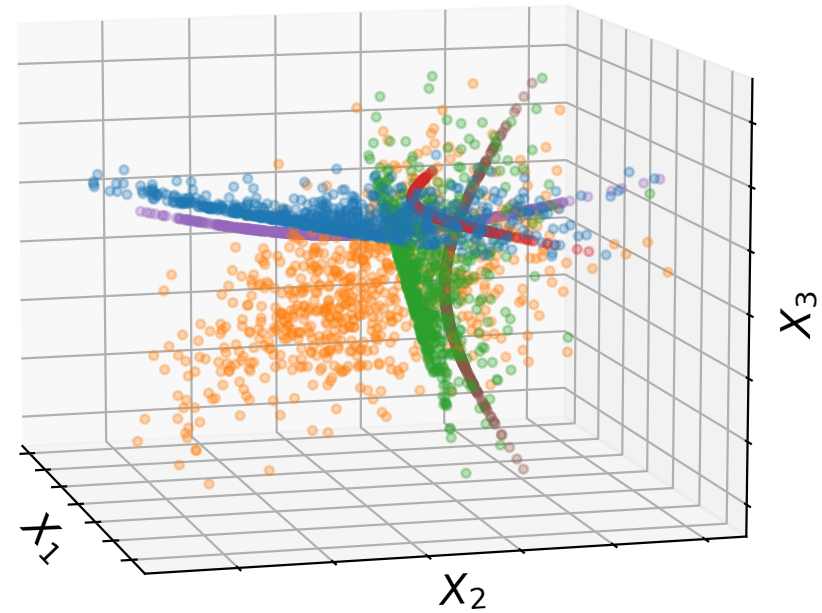}
    \end{minipage}
\end{figure}

The first key idea of the proof is that, after imputation, all data points with a given missing data pattern $m$ are mapped to a manifold $\Mcal^{(m)}$ of dimension $|obs(m)|$. For example in 3D, data points are mapped to $\RR^3$ when completely observed, to 2D manifolds when they have one value missing, to 1D manifolds when they have two values missing, and to one point when all values are missing (see Figure\,\ref{fig:example_manifolds}). Thus, Impute-then-Regress procedures first map data points to various manifolds depending on their missing data patterns and then apply a prediction function defined on the whole space including manifolds. The second key idea of the proof is to ensure that the original missing data patterns of imputed points can almost surely be identified. For this, the proof requires that all manifolds of the same dimension are pairwise transverse. This assumption is sufficient, though not necessary, to ensure that the intersection of two manifolds of dimension $|obs(m)|$ cannot itself be of dimension $|obs(m)|$. Transversality is a weak assumption. In fact, Thom's transversality theorem, (which we rely on in our proof) says that it is a generic property: it holds for ``typical examples'', i.e \emph{almost all} imputation functions will lead to transverse manifolds. To clarify this concept, we provide a particular case in 2D where 1D manifolds are not transverse in Appendix~\ref{ss:2D_manifolds}.

The proof is constructive and exhibits a function $g^\star_{\Phi}$ which achieves the Bayes rate for a given set of imputation functions. For each manifold $\Mcal^{(m)}$, ordered from smallest dimension to largest, we require that $g^\star_{\Phi}$ on $\Mcal^{(m)}$ equals the Bayes predictor for missing data pattern $m$ except on points for which $g^\star_{\Phi}$ has already been defined, i.e, the points where $\Mcal^{(m)}$ intersects with the manifolds ranked before it. Thus, we obtain a function $g^\star_{\Phi}$ that does not depend on $m$, and which for each manifold, equals the Bayes predictor except on subsets of measure zero under the assumption that manifolds of the same dimension are pairwise transverse. Refer to appendix~\ref{ss:bayes_optimality} for more details.

While this theorem is a very general result, it does not say what the optimal function associated to a given imputation looks like. In fact, depending on the imputation function it may be non-continuous, vary widely, and require a very large number of samples to be approximated.

\paragraph{Note on Impute-then-classify procedures -} Theorem~\ref{th:fbp} applies to regression problems. However, it can easily be shown that a similar result holds in binary classification settings. Indeed, in a binary classification setting, the Bayes predictor predicts class 1 if $P(Y=1|X)>0.5$ and -1 otherwise. Thus, it suffices to consider that the function of interest $f^\star(X)$ is the posterior probability $P(Y=1|X)$. Then the same arguments as those used to prove Theorem~\ref{th:fbp} can be used to show that Impute-then-classify procedures are Bayes optimal for almost all imputation functions. 

% Then Theorem~\ref{th:fbp} (actually not exactly the theorem but the main arguments that we use to prove it) tells us that for almost all imputation functions, there exists prediction functions so that they are equal to $P(Y=1|X)$ except on a set of measure zero. As a result the Bayes predictor in the classification setting will also be Bayes optimal (because the 0-1 loss is also an expectation and so we do not care about the set of measure zero).

% \begin{figure}
%     \begin{minipage}{.7\linewidth}
%     \caption{\textbf{Example - Imputation manifolds in two dimensions} --- Manifolds represented for linear imputation functions. $\Mcal^{(0, 0)}$ is the whole plane. Note that $\Mcal^{(1, 1)}$ need not be at the intersection of the two lines, it depends on the imputation function chosen. Here, $\Mcal^{(0, 1)}$ and $\Mcal^{(1, 0)}$ are transverse if and only if the two lines are not coincident.}
%     \label{fig:example_manifolds}
%     \end{minipage}%
%     \hfill
%     \begin{minipage}{.28\linewidth}
%     \scalebox{.21}{\subfile{tikz_manifolds}}
%     \end{minipage}
% \end{figure}

\section{Imputation versus regression: choosing one may break the other}

Theorem~\ref{th:fbp} gives a theoretical grounding to Impute-then-regress procedures. As it holds for almost any imputation function, one could very well choose simple and cheap imputations such as imputing by a constant. However, the difficulty of the ensuing learning problem will depend on the choice of imputation function. Indeed, the function $g^\star_{\Phi}$ that achieves the Bayes rate depends on the imputation function $\Phi$. In general, it may not be continuous or smooth. Thus $g^\star_{\Phi}$ can be more or less difficult to approximate by machine learning algorithms depending on the chosen imputation function.

\citet{LeMorvan2020Linear} showed that even if $Y$ is a linear function of $X$, imputing by a constant leads to a complicated Bayes predictor: piecewise affine but with $2^d$ regions. This result highlights how imputations neglecting the structure of covariates can result in additional complexity for the regression function $g^\star_{\Phi}$. Rather, another common practice is to impute by the conditional expectation: it minimizes the mean squared error between the imputed matrix and the complete one and is the target of most imputation methods. One hope may be that if $f^\star$ has desirable properties, such as smoothness, conditional imputation will lead to a function $g^\star_{\Phi}$ which inherits from these properties.

In this section we first show that replacing missing values by their conditional expectation in the oracle regression function $f^\star$ gives a small but non-zero risk. Characterizing the optimal function on the conditionally-imputed data, we find that it suffers from discontinuities and thus forms a difficult estimation problem. Rather, we study whether the imputation can be corrected for $f^\star$ to form the Bayes predictor on partially-observed data.

\subsection{Applying $f^\star$ on conditional imputations: chaining oracles isn't without risks.}

The conditional imputation function $\Phi^{CI}: \br{\RR \cup \cbr{\NA}}^d \to \RR^d$ is defined as follows:
\begin{equation*}
     \forall j \in \bbr{1, d},\: \Phi^{CI}_j(\widetilde X) =
    \begin{cases}
        X_j \: &\text{if} \: M_j = 0\\
        \E \sqb{X_j | X_{obs}, M} \: &\text{if} \: M_j = 1
    \end{cases}
\end{equation*}
Note that $\Phi^{CI} \in \Fcal^I$. To lighten notations, we will write $X^{CI}:= \Phi^{CI}(\widetilde X)$ to denote the conditionally imputed data.

\begin{restatable}[First order approximation]{lemma}{fimpth}
\label{le:fimp}
    Assume that the data is generated according to \eqref{eq:generation}. Moreover assume that (i) $f^\star \in \Ccal^2(\Scal, \RR)$ where $\Scal \subset \RR^d$ is the support of the data, and that (ii) there exists positive semidefnite matrices $\bar{H}^+ \in P_d^+$ and $\bar{H}^- \in P_d^+$ such that for all $X$ in $\Scal$, $\bar{H}^- \preccurlyeq H(X) \preccurlyeq \bar{H}^+$ with $H(X)$ the Hessian of $f^\star$ at $X$. Then for all $X$ in $\Scal$ and for all missing data patterns:
    \begin{equation}
        \frac{1}{2}\text{tr}\left(\bar{H}^-_{mis, mis} \Sigma_{mis|obs, M}\right) \leq \tilde f^\star( \widetilde X) - f^\star(X^{CI}) \leq \frac{1}{2}\text{tr}\left(\bar{H}^+_{mis, mis} \Sigma_{mis|obs, M}\right)
    \end{equation}
    
    where $\Sigma_{mis|obs, M}$ is the covariance matrix of the distribution of $X_{mis}$ given $X_{obs}$ and $M$.
\end{restatable}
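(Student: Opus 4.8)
The plan is to fix a missing-data pattern $M$ and the observed coordinates $X_{obs}$, and to Taylor-expand $f^\star$ in the missing coordinates around the imputed point. Write $\mu := \E\sqb{X_{mis}\mid X_{obs}, M}$, so that $X^{CI} = (X_{obs}, \mu)$ and, by definition of the conditional covariance, $\Sigma_{mis|obs,M} = \E\sqb{(X_{mis}-\mu)(X_{mis}-\mu)^\top \mid X_{obs}, M}$. Since $f^\star \in \Ccal^2$, for each realization of $X_{mis}$ I would use the exact second-order Taylor formula with integral remainder, expanding only in the missing block while keeping $X_{obs}$ fixed:
\begin{equation*}
 f^\star(X_{obs}, X_{mis}) = f^\star(X_{obs}, \mu) + \nabla_{mis} f^\star(X^{CI})^\top (X_{mis} - \mu) + R(X_{mis}),
\end{equation*}
with remainder $R(X_{mis}) = (X_{mis}-\mu)^\top \sqb{\int_0^1 (1-t)\, H_{mis,mis}(X_{obs}, \mu + t(X_{mis}-\mu))\, dt}(X_{mis}-\mu)$.

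Next I would take the conditional expectation given $(X_{obs}, M)$. The zeroth-order term is already $f^\star(X^{CI})$; the first-order term vanishes because $\E\sqb{X_{mis}-\mu \mid X_{obs}, M} = 0$ by the very choice of $\mu$. Recalling that $\tilde f^\star(\widetilde X) = \E\sqb{f^\star(X)\mid X_{obs}, M}$, this yields the clean identity
\begin{equation*}
 \tilde f^\star(\widetilde X) - f^\star(X^{CI}) = \E\sqb{R(X_{mis}) \mid X_{obs}, M},
\end{equation*}
so the whole problem reduces to sandwiching the averaged quadratic remainder.

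For the bounds, I would use that the global Loewner bounds $\bar H^- \preccurlyeq H(X) \preccurlyeq \bar H^+$ restrict to the missing principal block: for any coordinate selection these inequalities are preserved, hence $\bar H^-_{mis,mis} \preccurlyeq H_{mis,mis}(\cdot) \preccurlyeq \bar H^+_{mis,mis}$ pointwise along the integration path. Since $\int_0^1 (1-t)\,dt = 1/2$, the integral remainder is squeezed, giving for every realization $\frac{1}{2}(X_{mis}-\mu)^\top \bar H^-_{mis,mis}(X_{mis}-\mu) \le R(X_{mis}) \le \frac{1}{2}(X_{mis}-\mu)^\top \bar H^+_{mis,mis}(X_{mis}-\mu)$. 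Taking conditional expectation and applying the trace identity $\E\sqb{(X_{mis}-\mu)^\top A (X_{mis}-\mu)\mid \cdot} = \text{tr}\br{A\, \Sigma_{mis|obs,M}}$ for the constant matrices $A = \bar H^\pm_{mis,mis}$ then produces exactly the claimed two-sided bound.

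The main technical point to handle carefully is that the Hessian bound is assumed only on the support $\Scal$, whereas the integral remainder evaluates $H$ along the whole segment from $X^{CI}$ to $(X_{obs}, X_{mis})$; I would therefore need the relevant conditional slices of $\Scal$ to be convex (or otherwise argue that the segment stays in $\Scal$) so that $H_{mis,mis}$ is controlled all along the path. The remaining ingredients — measurability of the integral remainder in $X_{mis}$ so that the conditional expectation is well defined, and the fact that a principal submatrix inherits the Loewner order — are routine.
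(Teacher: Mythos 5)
Your proposal is correct and takes essentially the same route as the paper's proof: a second-order Taylor expansion of $f^\star$ in the missing block around the conditional mean $\mu = \E\sqb{X_{mis}|X_{obs},M}$ (the paper uses the Lagrange form of the remainder where you use the integral form, an immaterial difference), the first-order term annihilated by the choice of $\mu$, the restriction of the Loewner bounds to the principal $mis$-block (the paper's Lemma~\ref{lem:lem1}), and the identity $\E\sqb{(X_{mis}-\mu)^\top A (X_{mis}-\mu)\mid X_{obs},M} = \mathrm{tr}\br{A\,\Sigma_{mis|obs,M}}$ to conclude. The support caveat you flag is a genuine subtlety, but it affects the paper's argument identically (its Lagrange remainder also evaluates the Hessian at intermediate points, and its proof silently works with $f^\star \in \Ccal^2(\RR^d,\RR)$ despite the statement assuming regularity only on $\Scal$), so it is a shared gloss rather than a gap specific to your proof.
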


Appendix \ref{ss:proof_fimp} gives the proof. The assumption that $\bar{H}^- \preccurlyeq H(X) \preccurlyeq \bar{H}^+$ for any $X$ means that the minimum and maximum curvatures of $f^\star$ in any direction are uniformly bounded over the entire space. Lemma~\ref{le:fimp} shows that applying $f^\star$ to the conditionally imputed (CI) data is a good approximation of the Bayes predictor when there is no direction in which both the curvature of $f^\star$ and the conditional variance of the missing data given the observed one are high. Intuitively, if a low quality imputation is compensated by a flat function, or conversely, if a fast varying function is compensated by a high quality imputation, then $f^\star$ applied to the CI data approximates well the Bayes predictor.

\begin{restatable}[(Non-)Consistency of chaining oracles]{proposition}{propchain}
\label{prop:chain}
    The function $f^\star \circ \Phi^{CI}$ is Bayes optimal if and only if the function $f^{\star}$ and the imputed data $X^{CI}$ satisfy:
    \begin{equation}
        \label{eq:bo_cond}
        \forall M \; \text{s.t.} \; P(M)>0, \quad \E\sqb{f^\star(X)|X_{obs}, M} = f^\star(X^{CI}) \quad \text{almost everywhere}.
    \end{equation}
    Besides, 
    %Moreover, if $f^\star(X^{CI})$ is not Bayes optimal, then 
    under the assumptions of Lemma~\ref{le:fimp}, the excess risk of chaining oracles compared to the Bayes risk $\Rcal^\star$ is upper-bounded by:
    \begin{equation}
    \Rcal(f^\star \circ \Phi^{CI}) - \Rcal^\star \leq \frac{1}{4} \E_M \sqb{ \max \br{tr \br{\bar{H}^-_{mis, mis} \Sigma_{mis|obs, M}}^2, tr\br{\bar{H}^+_{mis, mis} \Sigma_{mis|obs, M}}^2}}
    \label{eq:excess_risk}
    \end{equation}
\end{restatable}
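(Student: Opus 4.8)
The plan is to derive both parts from the standard orthogonal decomposition of the squared risk around the conditional expectation. By \eqref{eq:BP} and the noise condition in \eqref{eq:generation}, the Bayes predictor $\tilde f^\star(\widetilde X) = \E\sqb{Y \mid X_{obs}, M}$ is the $L^2$-projection of $Y$ onto measurable functions of $(X_{obs}, M)$; every function we compare against is of this form, in particular $f^\star \circ \Phi^{CI}$, since $X^{CI}$ is a deterministic function of $X_{obs}$ and $M$. Square-integrability is guaranteed by $\E\sqb{Y^2} < \infty$. Hence, for any such $f$, the residual $Y - \tilde f^\star(\widetilde X)$ is orthogonal to $\tilde f^\star(\widetilde X) - f(\widetilde X)$, the cross term vanishes by the tower property, and the risk splits as
\begin{equation*}
    \Rcal(f) = \Rcal^\star + \E\sqb{\br{\tilde f^\star(\widetilde X) - f(\widetilde X)}^2}.
\end{equation*}
Taking $f = f^\star \circ \Phi^{CI}$, so that $f(\widetilde X) = f^\star(X^{CI})$, yields the key identity $\Rcal(f^\star \circ \Phi^{CI}) - \Rcal^\star = \E\sqb{\br{\tilde f^\star(\widetilde X) - f^\star(X^{CI})}^2}$.

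The first claim then follows at once: this excess risk is the expectation of a nonnegative quantity, hence vanishes if and only if $\tilde f^\star(\widetilde X) = f^\star(X^{CI})$ almost everywhere under the law of $\widetilde X$. Decomposing that law as a mixture over patterns $M$ with $P(M) > 0$ and substituting $\tilde f^\star(\widetilde X) = \E\sqb{f^\star(X) \mid X_{obs}, M}$ recovers exactly the characterization \eqref{eq:bo_cond}.

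For the excess-risk bound I would feed the pointwise two-sided inequality of Lemma~\ref{le:fimp} into this identity. Abbreviating $a_M := \frac{1}{2}\text{tr}\br{\bar{H}^-_{mis, mis}\Sigma_{mis|obs, M}}$ and $b_M := \frac{1}{2}\text{tr}\br{\bar{H}^+_{mis, mis}\Sigma_{mis|obs, M}}$, the lemma gives $a_M \leq \tilde f^\star(\widetilde X) - f^\star(X^{CI}) \leq b_M$ pointwise. Because $t \mapsto t^2$ is convex, its supremum over $[a_M, b_M]$ is attained at an endpoint, so $\br{\tilde f^\star(\widetilde X) - f^\star(X^{CI})}^2 \leq \max\br{a_M^2, b_M^2}$. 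Inserting this into the excess-risk identity, pulling out the factor $\frac{1}{4}$, and taking expectations then produces \eqref{eq:excess_risk}.

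The two steps are individually short, so the only delicate points are bookkeeping. First, the orthogonal decomposition must be justified with the integrability hypotheses of \eqref{eq:generation}, which ensure the cross term is both well-defined and zero. Second, and more substantively, one must handle the expectation of the bound carefully: since $\Sigma_{mis|obs, M}$ may a priori depend on $X_{obs}$, the pointwise inequality should be integrated first over $X_{obs}$ before reducing to an expectation over $M$ alone (as in the Gaussian case, where the conditional covariance is a deterministic function of $M$ and the notation $\E_M$ is exact). The one genuinely non-algebraic idea is the convexity step, which is what converts the signed two-sided bound of Lemma~\ref{le:fimp} into control on the squared error.
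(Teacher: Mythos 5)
Your proof is correct and takes essentially the same route as the paper's: the identical orthogonal decomposition $\Rcal(f^\star \circ \Phi^{CI}) = \Rcal^\star + \E\sqb{\br{\tilde f^\star(\widetilde X) - f^\star(X^{CI})}^2}$ (the paper expands the cross terms explicitly, killing one via $\E\sqb{\epsilon \mid X_{obs}, M}=0$ and the other by conditioning on $X_{mis}\mid X_{obs}, M$, which is exactly your tower-property argument written out), followed by the same reduction of \eqref{eq:bo_cond} to the vanishing of this nonnegative excess risk, and the same endpoint-of-a-convex-function step applied to the two-sided bound of Lemma~\ref{le:fimp}. Your constant bookkeeping is in fact slightly more careful than the paper's appendix, whose intermediate displays carry a factor $\frac{1}{2}$ where squaring the half-traces yields $\frac{1}{4}$, the value stated in the proposition and recovered by your calculation.
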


Appendix \ref{ss:proof_prop_chain} gives the proof. 
Condition~\eqref{eq:bo_cond} for Bayes optimality is clearly stringent.
Indeed, if one variable is missing, condition~\eqref{eq:bo_cond} says that the expectation of the regression function should be equal to the regression function applied at the expected entry. Although such functions are difficult to characterize precisely, it is clear that condition~\eqref{eq:bo_cond} is difficult to fulfill for generic regression functions (linear functions are among the few examples that do satisfy it). Therefore, for most functions $f^\star$, $f^\star \circ \Phi^{CI}$ is not Bayes optimal. Proposition~\ref{prop:chain} also gives an upper bound for the excess risk of the predictor $f^\star(X^{CI})$ compared to the Bayes rate, showing here again that if there is no direction in which both the curvature and the variance of the missing data given the observed one are high, the excess risk is small.

\emph{The special case of linear regression:} When $f^\star$ is a linear function, the curvature is 0, hence eq.\,(\ref{eq:excess_risk}) implies no excess risk. This is also visible from the expression of the Bayes predictor \eqref{eq:BP}, where the expectation on unobserved data can be pushed inside $f^\star$ as it is linear. The Bayes predictor can thus be exactly written as $f^\star$ applied to conditionally-imputed data.

\subsection{Regressing on conditional imputations, a good idea?}

\begin{restatable}[Regression function discontinuities]{proposition}{propdisc}
\label{prop:disc}
    Suppose that $f^\star \circ \Phi^{CI}$ is not Bayes optimal, and that the probability of observing all variables is strictly positive, i.e., for all $x$, $P(M=(0, \dots , 0), X=x)>0$. Then there is no \underline{continuous} function $g$ such that $g \circ \Phi^{CI}$ is Bayes optimal.
\end{restatable}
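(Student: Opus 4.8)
The plan is to argue by contradiction. Suppose there exists a \emph{continuous} $g$ with $g \circ \Phi^{CI}$ Bayes optimal; I will show this forces $g \circ \Phi^{CI} = f^\star \circ \Phi^{CI}$ as functions, contradicting the hypothesis that $f^\star \circ \Phi^{CI}$ is not Bayes optimal. The first ingredient is that the Bayes predictor $\tilde f^\star$ is the almost-everywhere unique minimizer of the $L^2$ risk \eqref{eq:minimization_pb}: achieving $\Rcal^\star$ therefore yields $g \circ \Phi^{CI}(\widetilde X) = \tilde f^\star(\widetilde X)$ almost everywhere with respect to the law of $(X,M)$.

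Next I would specialize to the fully-observed pattern $M=(0,\dots,0)$. By hypothesis this event has positive probability and, conditioned on it, $X$ keeps a density with full support $\Scal$; moreover on this event $\Phi^{CI}$ is the identity, so $X^{CI}=X$, and $\tilde f^\star(\widetilde X)=\E[f^\star(X)+\epsilon \mid X, M=(0,\dots,0)]=f^\star(X)$ using $\E[\epsilon \mid X_{obs}, M]=0$. Combining this with the previous paragraph gives $g=f^\star$ Lebesgue-almost everywhere on $\Scal$, i.e.\ on the top-dimensional imputation manifold $\Mcal^{(0,\dots,0)}=\Scal$.

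The crux is then to propagate this equality down to the lower-dimensional manifolds $\Mcal^{(m)}$, $m\neq(0,\dots,0)$, on which the incomplete data actually lands and where $g\circ\Phi^{CI}$ gets evaluated. A full-Lebesgue-measure subset of $\Scal$ is dense in $\Scal$ (since $X$ has a density, every ball around a support point has positive mass). As $g$ is continuous by assumption and $f^\star$ is continuous by the standing smoothness assumption of this section (cf.\ Lemma~\ref{le:fimp}), two continuous functions agreeing on a dense set must coincide everywhere on $\Scal$; hence $g(X^{CI})=f^\star(X^{CI})$ at \emph{every} imputed point, so $g\circ\Phi^{CI}=f^\star\circ\Phi^{CI}$ and $\Rcal(g\circ\Phi^{CI})=\Rcal(f^\star\circ\Phi^{CI})$ — the desired contradiction.

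I expect this last step to be the main obstacle, and the place where continuity is indispensable. For $m\neq(0,\dots,0)$ the manifold $\Mcal^{(m)}$ has dimension $|obs(m)|<d$ and is Lebesgue-null in $\RR^d$, so a priori the imputed points could all sit inside the null set where $g$ and $f^\star$ disagree; the almost-everywhere identity on $\Scal$ then carries no pointwise information there, and only joint continuity bridges the gap. This necessity is genuine: a discontinuous $f^\star$ — e.g.\ $f^\star=\mathds{1}\{X_1=0\}$ together with $\E[X_1\mid X_{obs}]=0$, which places a jump exactly on an imputation manifold — would make a continuous $g$ (here $g\equiv 0$) optimal while $f^\star\circ\Phi^{CI}$ is not. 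I would therefore state the continuity of $f^\star$ explicitly and verify that the full-support hypothesis indeed makes the full-measure agreement set dense in $\Scal$.
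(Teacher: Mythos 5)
Your proof is correct and takes essentially the same route as the paper's: argue by contradiction, identify $g \circ \Phi^{CI}$ with the Bayes predictor almost everywhere, specialize to the fully-observed pattern $M=(0,\dots,0)$ to obtain $g = f^\star$ almost everywhere, and use continuity to upgrade this to pointwise equality, contradicting the non-optimality of $f^\star \circ \Phi^{CI}$. Your treatment is in fact slightly more careful than the paper's terse step ``since $g$ is continuous, it implies $g = f^\star$'': you correctly observe that continuity of $f^\star$ itself (implicit in the paper, but guaranteed by the section's standing smoothness assumptions) is also required, and that the pointwise upgrade is exactly what carries the identity onto the Lebesgue-null imputation manifolds where $g \circ \Phi^{CI}$ is actually evaluated --- a necessity your $\mathds{1}\{X_1=0\}$ counterexample rightly demonstrates.
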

In other words, when conditional imputation is used, the optimal regression function experiences discontinuities unless it is $f^\star$. The proof is given in appendix \ref{ss:proof_prop_disc}. From a finite-sample learning standpoint, discontinuous functions are in general harder to learn: in the general case, non-parametric regression requires more samples to achieve a given error on functions without specific regularities as opposed to functions with a form of smoothness
\citep[see e.g.,][chap 3]{gyorfi2006distribution}. Hence, while regression on conditional imputation may be consistent (\autoref{th:fbp}), it can require an inordinate number of samples.

\subsection{Fasten your seat belt: corrected imputations may experience discontinuities.}

\begin{figure}[t]
    \includegraphics[width=.4\linewidth, clip, trim={1.9cm 1.9cm .2cm 2.6cm}]{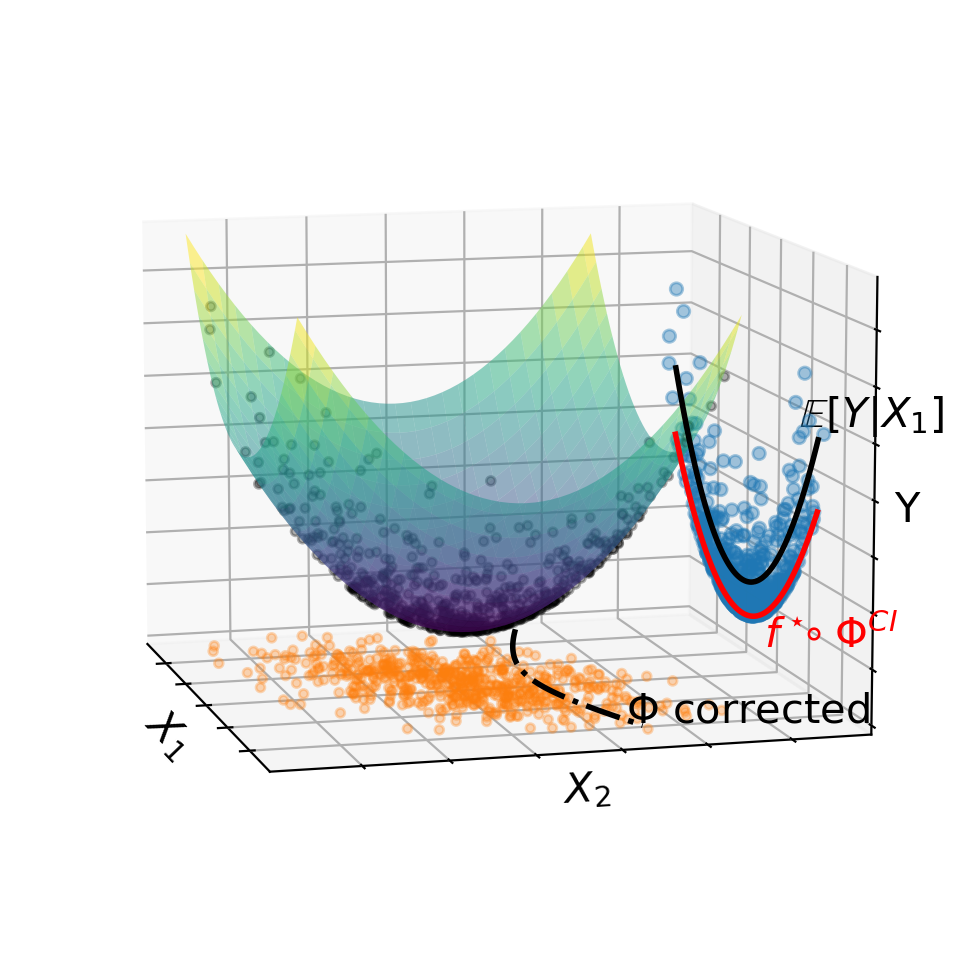}%
    \llap{\raisebox{.3\linewidth}{\rlap{\sffamily Bowl}\hspace*{.4\linewidth}}}
        \hfill%
    \includegraphics[width=.4\linewidth, clip, trim={1.9cm 1.9cm .2cm 2.6cm}]{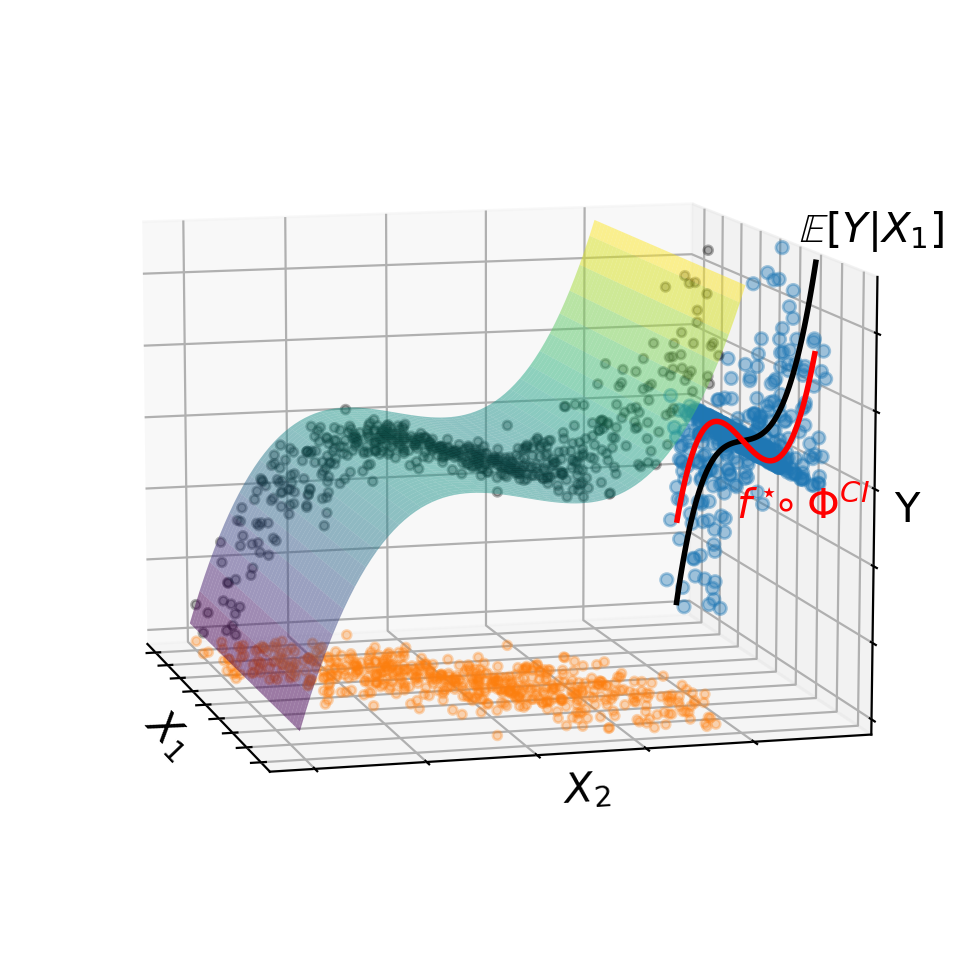}%
    \llap{\raisebox{.3\linewidth}{\rlap{\sffamily Wave}\hspace*{.4\linewidth}}}%
    \caption{\textbf{Left: corrected imputation} The regression function is $f^\star(x_1, x_2) \mapsto x_1^2 + x_2^2$. When $x_2$ is missing, chaining perfect conditional imputation with the regression function ($f^\star \circ \Phi^{CI}$) gives a biased predictor, shown in red, as the unexplained variance in $x_2$ is turned into bias. However, using as an imputation $\Phi(x_1) = \sqrt{\rho^2 x_1^2 + (1-\rho^2)}$ corrects this bias, with $\rho$ the correlation between $x_1$ and $x_2$.
    \textbf{Right: no continuous corrected imputation exists}. The
function is defined as $f^\star(x_1, x_2) \mapsto x_2^2 -
3\,x_2$. No continuous corrected imputation is possible because the Bayes
predictor on the partially-observed data $\mathbb{E}[Y|X_1]$ is
monotonous, while the regression function $f^\star$ is not.}
    \label{fig:2d_examples}
\end{figure}

Another possible route is to find \emph{corrected imputations} which we
define as imputation functions $\Phi$ such that, if $f^\star$ is used as
regression function, the impute-then-regress procedure $f^\star \circ
\Phi$ is Bayes optimal. Intuitively, given a
fixed regression function $f^\star$, the question is: can we "correct" an
imputation function and thus the manifold that it describes so that
$f^\star$ restricted to this manifold is equal to the Bayes
predictor?
Assuming $f^\star$ is continuous, the intermediate value theorem gives a first answer to this question by ensuring the existence of imputations functions satisfying 
\begin{align*}
f^\star \circ
\Phi (X_{obs(M)}, M) = \E \sqb{f^\star(X) | X_{obs(M)}, M}.
\end{align*}
For the same reasons as above, determining that such imputations not only exist but are \emph{continuous} is important from a practical perspective.  Indeed, assuming $f^\star$ is continuous, the Bayes predictor with missing values could then be tackled as the composition of two continuous functions, with an Impute-then-Regress strategy. Intuitively in 2D, the existence of a continuous corrected imputation can be seen as the existence of a continuous path in the 2D plane whose value by $f^\star$ equals the Bayes predictor. Figure~\ref{fig:2d_examples} (left) gives a simple example in 2D for which a continuous corrected imputation exists. Here if one chooses the imputation function of $X_2$ given $X_1$ as the black function denoted as $\Phi_{corrected}$, then its composition by the green paraboloid $f^\star$ gives the Bayes Predictor depicted on the right in black. By contrast, if one imputes by the conditional expectation, then its composition with $f^\star$ gives the red curve which is different from the Bayes Predictor. Note that the manifolds in Figure~\ref{fig:example_manifolds} were obtained using (continuous) corrected imputations functions for the same setting as Figure~\ref{fig:2d_examples} (left) but with 3-dimensional data. However, as illustrated in Figure~\ref{fig:2d_examples} (right), \emph{continuous} corrected imputations do not always exist. Indeed, on this example the Bayes predictor is non-decreasing but there is no continuous path in the 2D plane on which $f^\star$ is non-decreasing and maps at some point to both the 'purple' and 'yellow values' (proof in Appendix~\ref{ss:no_cc}). It is thus of interest to clarify when continuous corrected imputations exist. Proposition~\ref{prop:cont_imp} establishes such conditions.

\begin{restatable}[Existence of continuous corrected imputations]{proposition}{prop:cont_imp}
    \label{prop:cont_imp}
        Assume that $f^{\star}$ is uniformly continuous,  twice continuously differentiable and that, for all missing patterns $m$ and all $x_{obs}$, the support of $X_{mis} | X_{obs}=x_{obs}, M=m$ is connected. 
    Additionally, assume that for all missing patterns $m$, and all $(x_{obs}, x_{mis})$, the gradient of $f^{\star}$ with respect to the missing coordinates is nonzero:
    \begin{equation}
        \nabla_{x_{mis}} f^{\star} (x_{obs}, x_{mis}) \neq 0. \label{eq:regularity_assumption}
    \end{equation}
    Then, for all $m$, theres exist continuous imputation functions $\phi^{(m)} : \RR^{|obs(m)|} \to \RR^{|mis(m)|}$ such that $f^{\star} \circ \Phi$ is Bayes optimal. 
\end{restatable}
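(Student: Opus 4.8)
The plan is to reduce Bayes optimality to a single scalar equation in the missing coordinates, solve it pointwise with the intermediate value theorem, and then upgrade the pointwise solution to a \emph{continuous} selection using the nonvanishing-gradient hypothesis.

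First I would restate Bayes optimality in terms of the imputation alone. Since the conditional mean is the unique $L^2$-minimizer, $f^\star \circ \Phi$ is Bayes optimal if and only if $f^\star(\Phi(\widetilde X)) = \tilde f^\star(\widetilde X)$ almost everywhere; because $\Phi$ leaves the observed coordinates untouched, this is equivalent to requiring, for every pattern $m$ with $P(M=m)>0$ and almost every $x_{obs}$,
\[
f^\star\br{x_{obs},\, \phi^{(m)}(x_{obs})} = c_m(x_{obs}) := \E\sqb{\,f^\star(X)\mid X_{obs}=x_{obs},\, M=m\,}.
\]
The task thus becomes: for each $m$, build a continuous $\phi^{(m)}$ that solves this scalar equation in the $|mis(m)|$ missing coordinates, where the right-hand side is the known target $c_m$.

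Second, I would establish existence of a solution for each fixed $x_{obs}$. The map $x_{mis} \mapsto f^\star(x_{obs}, x_{mis})$ is continuous on the connected support $S_{x_{obs}}$ of $X_{mis}\mid X_{obs}=x_{obs}, M=m$, so its image is an interval; as $c_m(x_{obs})$ is an average of $f^\star$ over $S_{x_{obs}}$ it lies in this interval, and the intermediate value theorem provides at least one point attaining it. This is exactly the IVT remark preceding the proposition, and by itself only yields a (possibly discontinuous) corrected imputation; the real content is a continuous selection.

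Third, the crux is continuity. When $|mis(m)|>1$ the level set $\{f^\star=c_m\}$ is a hypersurface and a naive selection need not be continuous, so the hypothesis $\nabla_{x_{mis}} f^\star \neq 0$ must be used. I would normalize the gradient field $V(x_{obs},x_{mis}) = \nabla_{x_{mis}} f^\star / \nm{\nabla_{x_{mis}} f^\star}^2$, which is well defined and $\mathcal C^1$ since the gradient never vanishes and $f^\star \in \Ccal^2$, and flow along it treating $x_{obs}$ as a parameter. The trajectory $\gamma_{x_{obs}}(t)$ then satisfies $\tfrac{d}{dt} f^\star(x_{obs}, \gamma_{x_{obs}}(t)) = \nabla_{x_{mis}} f^\star \cdot V = 1$, so $f^\star$ grows at unit speed along orbits; starting from a fixed base point $x^0$ and setting $\phi^{(m)}(x_{obs}) = \gamma_{x_{obs}}\br{c_m(x_{obs}) - f^\star(x_{obs}, x^0)}$ lands exactly on the target level, and continuity of $\phi^{(m)}$ follows from smooth dependence of ODE flows on initial data and parameters together with continuity of $c_m$. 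In the scalar case $|mis(m)|=1$ this simplifies: a nonvanishing derivative forces strict monotonicity of $f^\star(x_{obs},\cdot)$, hence a unique inverse, whose continuity is immediate from the implicit function theorem.

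I expect two technical obstacles. The first is continuity of the target $c_m(x_{obs})$ in $x_{obs}$; this is in fact necessary, since a continuous $\phi^{(m)}$ composed with the continuous $f^\star$ must be continuous, and I would derive it from the uniform continuity of $f^\star$ together with continuity of the conditional law of $X_{mis}\mid X_{obs}, M$ via dominated convergence. The second and main obstacle is ensuring the gradient flow reaches the level $c_m(x_{obs})$ in finite time without escaping the region where $V$ is controlled, i.e.\ completeness of $V$; here I would use that $c_m(x_{obs})$ lies strictly between the extreme values of $f^\star$ attained on the connected support (from the second step), so the required flow time is finite and the orbit reaches the target level before any blow-up, flowing forward or backward according to the sign of $c_m(x_{obs}) - f^\star(x_{obs}, x^0)$.
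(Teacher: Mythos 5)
Your reduction of Bayes optimality to the scalar equation $f^\star\br{x_{obs}, \phi^{(m)}(x_{obs})} = c_m(x_{obs})$ and your use of the intermediate value theorem on the connected conditional support are exactly the first steps of the paper's proof. But from there the routes diverge: the paper does not construct the selection by hand, it invokes a \emph{global} implicit function theorem (Theorem~6 of Arutyunov and Zhukovskiy, 2019), checking its regularity conditions via the uniform continuity and $\Ccal^2$ smoothness of $f^\star$, the existence of one zero via the IVT, and its rank condition (GR1) via the nonvanishing gradient \eqref{eq:regularity_assumption}. Your normalized-gradient-flow construction is a genuinely different, constructive attempt at the same global continuation — and that is precisely where it has a gap.

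The flawed step is your resolution of the completeness obstacle: "the required flow time is finite \ldots\ so the orbit reaches the target level before any blow-up" is a non sequitur. The field $V = \nabla_{x_{mis}} f^\star / \nm{\nabla_{x_{mis}} f^\star}^2$ has $\nm{V} = 1/\nm{\nabla_{x_{mis}} f^\star}$, and the hypotheses permit $\nm{\nabla_{x_{mis}} f^\star} \to 0$ at infinity (e.g.\ $f^\star(x_{obs}, u) = \arctan u$ satisfies uniform continuity, $\Ccal^2$, and a nonvanishing gradient). The maximal solution can therefore escape to infinity at a finite time $t^* < T(x_{obs}) := c_m(x_{obs}) - f^\star(x_{obs}, x^0)$, with $f^\star(\gamma(t)) = f^\star(x^0) + t$ tending to the finite limit $f^\star(x^0) + t^* < c_m(x_{obs})$: a finite \emph{required} time does not imply the solution \emph{exists} that long. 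In one missing dimension the band $\{u : f^\star(x_{obs}, u) \in [f^\star(x^0)\wedge c_m,\, f^\star(x^0)\vee c_m]\}$ and monotonicity save you, since blow-up can only occur as $f^\star$ approaches its supremum along the line; but with $|mis(m)| \geq 2$ the trajectory can saturate along a channel where $f^\star$ tends to a value $L$ strictly below $c_m(x_{obs})$, while the nonempty level set $\{f^\star = c_m(x_{obs})\}$ sits in a region the orbit never visits. Ruling this out requires exactly the kind of global hypotheses packaged in the cited theorem; condition \eqref{eq:regularity_assumption} alone is local. One secondary point: you are right that continuity of $c_m(\cdot) = \tilde f^\star(\cdot, m)$ is needed and it is not implied by the stated assumptions (it requires some continuity of the conditional law of $X_{mis}$ given $X_{obs}$, $M$); the paper's own proof also leaves this implicit when checking the conditions of the cited theorem, so flagging it is fair, but deriving it by "dominated convergence" would need that extra assumption stated.
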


Appendix~\ref{app:continuous_imputations} gives a proof based on a global implicit function theorem. Assumption~\ref{eq:regularity_assumption} is restrictive: it is for instance not met for our example in Figure~\ref{fig:2d_examples} (left), which still admits continuous corrected imputations. This highlights the fact that continuous corrected imputations also exist under weaker conditions, but it is difficult to conclude on ``how often'' it is the case.

\section{Jointly optimizing an impute-n-regress procedure: NeuMiss+MLP}

The above suggests that it is beneficial to adapt the regression function to the imputation procedure and vice versa. Hence, we introduce a method for the joint optimization of these two steps by chaining a NeuMiss network with an MLP (multi-layer perceptron).

% NeuMiss \citep{LeMorvan2020NeuMiss} is a neural-network architecture originally designed to approximatethe Bayes predictor for a linear model with missing values.  It contains a theoretically-groundedNeumann block that can efficiently approximate the conditional expectation of the missing valuesgiven the observed ones.  Here, we reuse the Neumann block as it outputs a learned imputation:each observed coordinate is mapped to its observed value and each missing coordinate is mappedto a function of the observed ones. The whole architecture can be seen as an Impute-then-Regressarchitecture, but that can be jointly optimized.

NeuMiss \citep{LeMorvan2020NeuMiss} is a neural-network architecture originally designed to approximate the Bayes predictor for linear models with missing values. It contains a Neumann block that has the particularity of using element-wise multiplications by the missingness indicator as non-linearities. %Here we reuse this block, chained with an MLP, to create a new architecture suited for missing values.
Here we reuse this block to play the role of an imputation layer. This choice is motivated by two key reasons. First, the Neumann block is a theoretically grounded layer for missing values: it can approximate the conditional expectation of the missing values given the observed ones with an error that decays exponentially fast with its depth. As explained in Prop. 4.1, this property is desirable in some cases. Second, it is a differentiable block, which allows it to be chained with a MLP and learned jointly with the regression function. The resulting architecture can thus be seen as an Impute-then-Regress architecture, but that can be jointly optimized.

We performed one minor improvement on the NeuMiss architecture compared to the original paper. Though the theory behind NeuMiss points to using shared weights in the Neumann block as well as residual connections going from the input to each hidden layer of the Neumann block, \citet{LeMorvan2020NeuMiss} used neither. We found empirically that shared weights in the Neumann block as well as residual connections improved performance. Therefore, we used both in all our experiments. %
%Moreover, while \citet{LeMorvan2020NeuMiss} initialized the weights of the Neumann block randomly, we chose to initialize them with sample estimates of quantities that should be targeted to perform well according to \citet{LeMorvan2020NeuMiss}.
For clarity, the (non-linear) NeuMiss architecture %
% and its initialization strategy are
is described in detail in Appendix~\ref{ss:NeuMiss_archi}.

\section{Empirical study of impute-n-regress procedures}

\subsection{Experimental setup}\label{sec:experimental_setup}

\paragraph{Data generation}
The data $X \in \RR^{n \times d}$ are generated according to a multivariate Gaussian distribution $\Ncal(\mu, \Sigma)$ where the mean is drawn from a standard Gaussian and the covariance is generated as $\Sigma = B B^\top + D$. $B \in \RR^{d \times q}$ is a matrix with entries drawn from a standard normal Gaussian distribution, and $D$ is a diagonal matrix with small entries that ensures that the covariance matrix is full rank. We study two correlation settings called \emph{high} and \emph{low} corresponding respectively to $q=\texttt{int}(0.3*d)$ and $q=\texttt{int}(0.7*d)$. The experiments are run with $d=50$.

\paragraph{Choice of $f^\star$} The response $Y$ is generated according to $Y = f^\star(X) + \epsilon$ with three choices of $f^\star$ named \emph{bowl}, \emph{wave}, and \emph{break} depicted in Figure~\ref{fig:f_star} (exact expression in appendix \ref{ss:f_star}). $\beta$ is a vector of ones normalized such that the quantity $z = \beta^\top X + \beta_0$ follows a Gaussian distribution centered on 1 with variance 1. Note that $f^\star_{bowl}$, $f^\star_{wave}$ and $f^\star_{break}$ were designed so that the desired variations occur over the support of the data. The noise $\epsilon$ is chosen so as to have a signal-to-noise ratio of 10.
\begin{figure}[h]
    \begin{minipage}{.32\linewidth}
	\caption{\textbf{Bowl, wave and break functions} used for $f^\star$ in the empirical study.}
	\label{fig:f_star}
    \end{minipage}%
    \hfill%
    \begin{minipage}{.18\linewidth}
	\includegraphics[width=\linewidth]{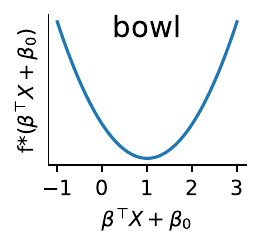}
    \end{minipage}%
    \hfill%
    \begin{minipage}{.18\linewidth}
	\includegraphics[width=\linewidth]{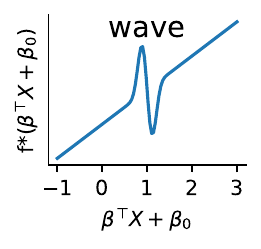}
    \end{minipage}%
    \hfill%
    \begin{minipage}{.18\linewidth}
	\includegraphics[width=\linewidth]{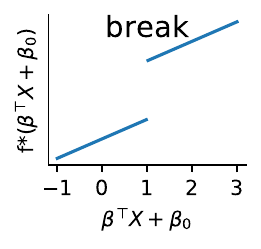}
    \end{minipage}%
\end{figure}

\vspace{-0.5em}

\paragraph{Missing values} 50\% of the entries of $X$ were deleted according to one of two missing data mechanisms: Missing Completely At Random (MCAR) or Gaussian self-masking \citep[GSM, see ][]{LeMorvan2020NeuMiss}. Gaussian self-masking is a Missing Not At Random (MNAR) mechanism, where the probability that a variable $j$ is missing depends on $X_j$ via a Gaussian function.

\paragraph{Baseline methods benchmarked}
For each level of correlation (\emph{low} or \emph{high}), for each function $f^\star$ (\emph{bowl}, \emph{wave} or \emph{break}), and each missing data mechanism (MCAR or GSM), we compare a number of methods. First, for reference, we compute various oracle predictors:
\begin{itemize}[itemsep=0ex, topsep=0ex, partopsep=0.5ex, parsep=0.5ex, leftmargin=2.5ex]
    \item \textbf{Bayes predictor}: This is the function that achieves the lowest achievable risk. In general cases, its expression cannot be derived analytically. However, we show that it can be derived for ridge functions, i.e. functions of the form $x \mapsto g(\beta^\top x)$, for some choices of $g$ including polynomials, the Gaussian cdf and piecewise constant functions. We thus built $f^\star_{bowl}$, $f^\star_{wave}$ and $f_{break}$ as combination of these base functions which allows us to compute their corresponding Bayes predictors.  Appendix~\ref{ss:f_star} gives their expressions.
    \item \textbf{Chained oracles}: $f^\star \circ \Phi^{CI}$ consists in imputing by the conditional expectation and then applying $f^\star$. The analytical expression of $\Phi^{CI}$ can be derived analytically for both MCAR and GSM, and we thus use this analytical expression to impute the missing values.
    \item \textbf{Oracle + MLP}: The data is imputed using the analytical expression of the conditional expectation, and then a MLP is fitted to the completed data.
\end{itemize}
These three predictors all use ground truth information (parameters $\mu$, $\Sigma$ of the data distribution, expression of $f^\star$ or of the missing data mechanism) which are unavailable in practice. They are mainly useful as reference points. We then compare the NeurMiss+MLP architecture and a number of classic Impute-then-Regress methods as well as gradient boosted regression trees:
\begin{itemize}[itemsep=0ex, topsep=0ex, partopsep=0.5ex, parsep=0.5ex, leftmargin=2.5ex]
    \item \textbf{Mean + MLP} The data is imputed by the mean, and a multilayer perceptron (MLP) is fitted to the completed data.
    \item \textbf{MICE + MLP} The data is imputed using Scikit-learn's \citep[BSD licensed]{Pedregosa2011Scikit} conditional imputer \texttt{IterativeImputer} that adapts the popular Multivariate Imputation by Chained Equations  \citep[MICE,][]{VanBuuren2018Flexible} to be able to impute a test set. A multilayer perceptron (MLP) is then fitted to the completed data.

    \item \textbf{GBRT}: Gradient boosted regression trees  (Scikit-learn's \texttt{HistGradientBoostingRegressor} with default parameters). This predictor readily supports missing values: during training, missing values on the decision variable for a given split are sent to the left or right child depending on which provides the largest gain. This is know as the Missing Incorporated Attribute strategy \citep{Twala2008_mia}.
\end{itemize}
Finally, we also run \textbf{Mean + mask + MLP} as well as \textbf{MICE + mask + MLP} in which the mask is concatenated to the imputed data before fitting a MLP. Concatenating the mask is a widespread pratice to account for MNAR data.

All MLPs are implemented with PyTorch \citep{paszke2019pytorch}. A validation set is used to choose MLPs' depth (1, 2 or 5), width ($1d$, $5d$ or $10d$), initial learning rate (ranging from $5.10^{-4}$ to $10^{-2}$) and weight decay (ranging from $10^{-6}$ to $10^{-3}$). Adam is used with an adaptive learning rate: the learning rate is divided by 5 each time 10 consecutive epochs fail to decrease the training loss by at least 1e-4. Early stopping is triggered when the validation score does not improve by at least 1e-4 for 12 consecutive epochs. The batch size is set to 100, and ReLUs are used as activation functions. Finally for NeuMiss the depth is set to 20. Note that since the weights of NeuMiss are shared, increasing its depth does not increase its number of parameters. %
For gradient boosted regression trees, several hyperparameters are chosen using the validation set including the maximum number of leaves for each tree (from 50 to 600), the maximum number of iterations for the boosting process (from 100 to 300), as well as the minimum number of samples per leaf (from 10 to 50). 

The experiments use training sets of size $n=100\,000$ and validation and test sets of size $n=10\,000$. The code for all experiments is available at \url{https://github.com/marineLM/Impute\_then\_Regress}.

\subsection{Experimental results}

The results are presented in Figure~\ref{fig:boxplots} as well as in Figure~\ref{fig:boxplots_discontinuous_linear} (Appendix~\ref{ss:boxplots_discontinuous_linear}).

\begin{figure}[b!]
    \includegraphics[width=\linewidth]{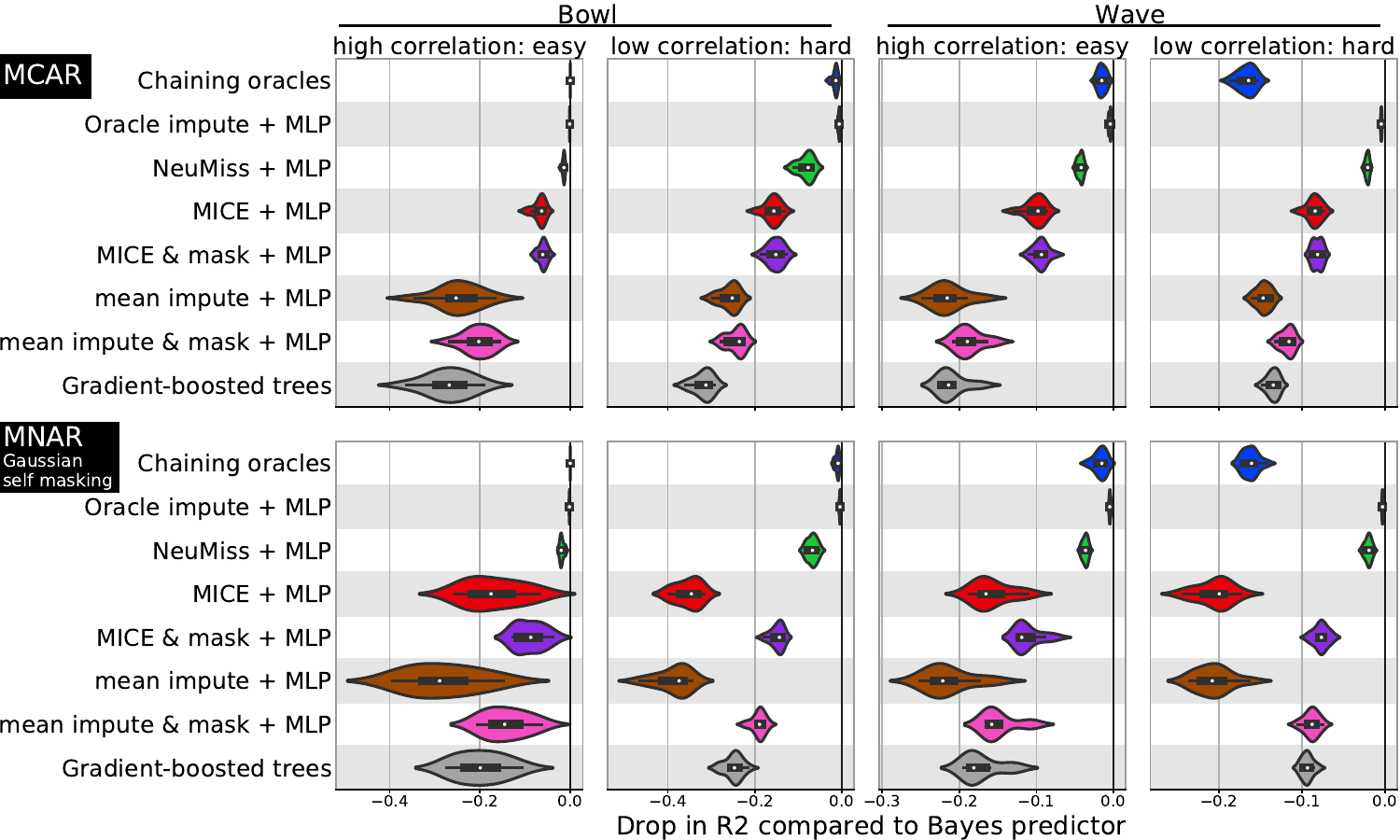}
    \caption{Performances (R2 score on a test set) compared to that of the Bayes predictor across 10 repeated experiments.}
    \label{fig:boxplots}
\end{figure}

\paragraph{Chaining oracles fails when both curvature is high and correlation is low.} The chained oracle has a performance close to that of the Bayes predictor in all cases except when the wave or break functions are applied to low correlation data. This observation illustrates well Proposition~\ref{prop:chain}. Intuitively, the Bayes predictor for each missing data pattern is a smoothed version of $f^\star$, and it is all the more smoothed that there is uncertainty around the likely values of the missing data. In the low correlation setting, the uncertainty is such that $f^\star$ is not a good proxy anymore for the Bayes predictor.

\paragraph{Regressing on oracle conditional imputation provide excellent performances.} Contrary to the chained oracles, \emph{Oracle + MLP} is close to the Bayes rate in all cases. This result should be put into perspective with Proposition~\ref{prop:disc}, which states that there is no \emph{continuous} regression function $g$ such that $g \circ \Phi^{CI}$ is Bayes optimal unless it is $f^\star$. Indeed, as the MLP can only learn continuous functions, it shows that there are continuous functions $g$ such that $g \circ \Phi^{CI}$, even though it is not Bayes optimal, performs very well.

\paragraph{Adding the mask is critical in MNAR settings with \emph{mean} and \emph{MICE} imputations} In MNAR settings, missingness carries information that can be useful for prediction. However, both the mean and iterative conditional imputation output an imputed dataset in which the missingness information is more difficult to retrieve. For this reason, it is common practice to concatenate the mask with the imputed data to expose the missingness information to the predictor. Our experiments show that under self-masking (MNAR), adding the mask to the mean or iteratively imputed data markedly improves performances. Note that NeuMiss does not require adding the mask as an input since the missingness information is already incorporated via the non-linearities.

\paragraph{NeuMiss+MLP performs best among Impute-then-Regress predictors.} In \emph{all}
settings, NeuMiss performs best. GBRT performs poorly here possibly because they are not well adapted to approximate smooth functions. Finally, note that when the difficulty of the problem increases, for example with a lower correlation, then (i) the performance of the Bayes predictor decreases and (ii) the differences in performance among methods is reduced, as in the lower right panel.

\section{Conclusion}

Impute-then-regress procedures assemble standard statistical routines to build predictors suited for data with missing values. However, we have shown that seeking the best prediction of the outcome leads to different tradeoffs compared to inferential purposes. Given a powerful learner, \emph{almost all imputations} lead asymptotically to the optimal prediction, \emph{whatever the missingness mechanism}. A good choice of imputation can however reduce the complexity of the function to learn.
Though conditional expectation can lead to discontinuous optimal regression functions, our experiments show that it still leads to easier learning problems compared to simpler imputations. 
In order to  adapt the imputation to the regression function, we proposed to jointly learn these two steps by chaining a trainable imputation via the NeuMiss networks and a classical MLP. An empirical study of non-linear regression shows that it outperforms impute-then-regress procedures built on standard imputation methods as well as gradient-boosted trees with incorporated handling of missing values. In further work, it would be useful to theoretically characterize the learning behaviors of Impute-then-Regress methods in finite sample regimes. 

\begin{ack}
MLM, JJ, and GV acknowledge funding via DataIA MissingBigData. MlM and GV acknowledge funding via ANR-17-CE23-0018 DirtyData and GV acknowledges funding via ANR-20-CHIA-0026 LearnI.
JJ acknowledges funding via ANR-16-IDEX-0006.
\end{ack}

\bibliographystyle{plainnat}
\bibliography{Reports-Neurips2021}

\clearpage
\appendix

\toptitlebar
{\centering{\Large{\bfseries  Supplementary materials} -- \mytitle \par}}
\bottomtitlebar

\section{Proofs}

\subsection{Proof of Lemma~\ref{lem:manifold}}

\begin{lemma}
\label{lem:manifold}
    Let $\phi^{(m)}\in \Ccal^\infty\br{\RR^{|obs(m)|}, \RR^{|mis(m)|}}$ be the imputation function for missing data pattern $m$, and let $\Mcal^{(m)} = \cbr{x \in \RR^d: x_{mis} = \phi^{(m)}(x_{obs})}$. For all $m$, $\Mcal^{(m)}$ is an $|obs|-$dimensional manifold.
\end{lemma}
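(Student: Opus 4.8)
The plan is to recognize $\Mcal^{(m)}$ as nothing but the graph of the smooth map $\phi^{(m)}$, and then invoke the standard fact that the graph of a $\Ccal^\infty$ map is a smooth embedded submanifold whose dimension equals that of its domain. Write $k = |obs(m)|$ and $\ell = |mis(m)|$, so $k + \ell = d$. A point $x \in \RR^d$ lies in $\Mcal^{(m)}$ exactly when its missing coordinates are determined by its observed ones through $\phi^{(m)}$; hence the observed coordinates $x_{obs}$ are free parameters and the missing ones are a smooth function of them. This already suggests the expected dimension $k = |obs|$.

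The cleanest route is to exhibit a single global chart. I would define $\psi : \RR^{|obs(m)|} \to \RR^d$ componentwise by $\psi_j(u) = u_j$ for $j \in obs(m)$ and $\psi_j(u) = \phi^{(m)}_j(u)$ for $j \in mis(m)$, so that $\psi(u)$ records $u$ on the observed coordinates and imputes the rest via $\phi^{(m)}$. Then I would check four properties: (i) $\psi$ is $\Ccal^\infty$, since its observed components are coordinate projections and its missing components are the $\Ccal^\infty$ functions $\phi^{(m)}_j$; (ii) $\psi(\RR^{|obs|}) = \Mcal^{(m)}$, directly from the defining condition $x_{mis} = \phi^{(m)}(x_{obs})$; (iii) $\psi$ is injective with inverse given by the restriction to $\Mcal^{(m)}$ of the projection $\pi_{obs} : x \mapsto x_{obs}$, which is continuous, so $\psi$ is a homeomorphism onto its image; and (iv) the Jacobian $D\psi(u)$ contains the rows indexed by $obs(m)$ equal to the identity block $I_{k}$, hence has rank $k$ everywhere, so $\psi$ is an immersion. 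Properties (iii) and (iv) together say $\psi$ is a smooth embedding, and therefore $\Mcal^{(m)} = \psi(\RR^{|obs|})$ is a $k$-dimensional embedded $\Ccal^\infty$ submanifold of $\RR^d$, in fact diffeomorphic to $\RR^{|obs|}$.

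As a cross-check one can instead use the regular-value (preimage) theorem: define $F : \RR^d \to \RR^{\ell}$ by $F(x) = x_{mis} - \phi^{(m)}(x_{obs})$, so that $\Mcal^{(m)} = F^{-1}(0)$. In the block decomposition into observed and missing coordinates the differential is $DF = \br{-D\phi^{(m)}(x_{obs}) \mid I_{\ell}}$, which has full row rank $\ell$ at every $x$; thus $0$ is a regular value and $F^{-1}(0)$ is a submanifold of dimension $d - \ell = k = |obs|$. The only point requiring care, and the closest thing to an obstacle, is bookkeeping of the coordinate ordering: $obs(m)$ and $mis(m)$ are interleaved index sets rather than a clean $\RR^k \times \RR^\ell$ split, so one should fix once and for all the permutation sorting observed before missing and note that conjugating by the corresponding (linear, invertible) permutation of $\RR^d$ preserves smoothness and all rank computations. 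There is no genuine analytic difficulty, since smoothness of $\phi^{(m)}$ is assumed; the content is simply correctly identifying $\Mcal^{(m)}$ as a graph and verifying the embedding conditions.
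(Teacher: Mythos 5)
Your proof is correct, and your primary route differs from the paper's. The paper proves the lemma via the preimage theorem: it defines $h^{(m)}(x) = x_{mis} - \phi^{(m)}(x_{obs})$, observes that the Jacobian $\bigl(A \mid Id\bigr)$ has full row rank $|mis|$ because of the identity block on the missing coordinates, concludes that $\bm 0$ is a regular value, and invokes the preimage theorem (Guillemin--Pollack, p.~21) to get a submanifold of dimension $d - |mis| = |obs|$ --- this is exactly your ``cross-check'' argument, down to the same rank computation. Your main argument instead exhibits $\Mcal^{(m)}$ as the graph of $\phi^{(m)}$ and verifies directly that the parametrization $\psi$ is a smooth embedding (smooth, bijective onto $\Mcal^{(m)}$, immersion via the identity block in $D\psi$, homeomorphism onto its image via the coordinate projection $\pi_{obs}$). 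This route is slightly longer but buys more: it produces a single global chart, showing $\Mcal^{(m)}$ is not merely a manifold but diffeomorphic to $\RR^{|obs|}$, whereas the preimage theorem gives the manifold structure without a global parametrization. The graph viewpoint is also the one the paper itself adopts in the next step, since the proof of Lemma~\ref{lem:manifold_intersection} applies Thom's transversality theorem to $\text{graph}(\phi^{(m)})$, so your identification of $\Mcal^{(m)}$ as a graph is consistent with, and arguably anticipates, the paper's subsequent machinery. Your remark on the interleaved index bookkeeping (conjugating by the permutation sorting observed before missing coordinates) is a point the paper glosses over and is handled correctly.
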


\begin{proof}
    Let:
    \begin{align*}
        h^{(m)}: \RR^d & \to \RR^{|mis|}\\
        x & \mapsto x_{mis} - \phi^{(m)}(x_{obs})
    \end{align*}
    \underline{Regular value:} We will show that $\bm 0_{mis}$ is a regular value of $h^{(m)}$. By definition \citep[see p21 in][]{GP_book}, a point $y \in \RR^{|mis|}$ is a regular value of $h^{(m)}$ if $dh^{(m)}_x$ is surjective at every point $x$ such that $h^{(m)}(x) = y$. The mapping $dh^{(m)}_x$ is linear and can be represented by the Jacobian of $h^{(m)}$ at $x$:
    
    \newcommand{\BigA}{\vrule width 0pt height 17pt depth 15pt A}
    \begin{equation*}
        J_{h^{(m)}}(x) =
            \left(
                \begin{matrix} & \BigA & \vline & Id & \end{matrix}
            \right), \quad
            A \in \RR^{|mis|\times|obs|},\, Id \in \RR^{|mis|\times|mis|}.
    \end{equation*}
    Given the structure of $J_{h^{(m)}}(x)$, it is obviously of rank $|mis|$ at every point $x$. Thus $dh^{(m)}_x$ is surjective at every point $x$, and it is true in particular for the points $x$ such that $h^{(m)}(x) = \bm 0$. We conclude that by definition, $\bm 0_{mis}$ is a regular value of $h^{(m)}$.
    
    \underline{Preimage theorem:} By the Preimage theorem (\citep{GP_book}, p.21), since $\bm 0 \in \RR_{mis}$ is a regular value of $h^{(m)}: \RR^d \to \RR^{|mis|}$, then the the preimage $\br{h^{(m)}}^{-1}(\bm 0)$ is a submanifold of $\RR^d$ of dimension $d - |mis| = |obs|$.
    
    Since by definition, $\br{h^{(m)}}^{-1}(\bm 0) = \Mcal^{(m)}$, we have that $\Mcal^{(m)}$ is a $|obs|-$dimensional mainfold.
\end{proof}

\subsection{Proof of Lemma~\ref{lem:manifold_intersection}}

\begin{lemma}
\label{lem:manifold_intersection}
    Let $m$ and $m^\prime$ be two distinct missing data patterns with the same number of missing values $|mis|$. Let $\phi^{(m)}\in \Ccal^\infty\br{\RR^{|obs(m)|}, \RR^{|mis(m)|}}$ be the imputation function for missing data pattern $m$, and let $\Mcal^{(m)} = \cbr{x \in \RR^d: x_{mis} = \phi^{(m)}(x_{obs})}$. We define similarly $\phi^{(m^\prime)}$ and $\Mcal^{(m^\prime)}$.
    For almost all imputation functions $\phi^{(m)}$ and $\phi^{(m^\prime)}$,
    \begin{equation}
        dim\br{ \Mcal^{(m)} \cap \Mcal^{(m^\prime)} } =
        \begin{cases}
            0 \quad & if \; |mis| > \frac{d}{2}\\
            d - 2|mis| \quad &otherwise.
        \end{cases}
    \end{equation}
\end{lemma}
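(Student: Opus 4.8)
The plan is to reduce the claim to a transversality (equivalently, regular-value) statement and then read off the dimension from the preimage theorem. By Lemma~\ref{lem:manifold}, $\Mcal^{(m)}$ and $\Mcal^{(m')}$ are smooth submanifolds of $\RR^d$ of dimension $|obs| = d - |mis|$. I would stack the two defining constraints into a single map $H : \RR^d \to \RR^{2|mis|}$, $H(x) = \br{x_{mis(m)} - \phi^{(m)}(x_{obs(m)}),\, x_{mis(m')} - \phi^{(m')}(x_{obs(m')})}$, so that $H^{-1}(\bm{0}) = \Mcal^{(m)} \cap \Mcal^{(m')}$. Writing $H = (h^{(m)}, h^{(m')})$ with the two submersions $h^{(m)}, h^{(m')}$ from the proof of Lemma~\ref{lem:manifold}, a standard linear-algebra fact (for surjective $L_1, L_2$, the pair $(L_1, L_2)$ is onto iff $\ker L_1 + \ker L_2$ is the whole space) shows that $\bm{0}$ is a regular value of $H$ if and only if $\Mcal^{(m)}$ and $\Mcal^{(m')}$ are transverse. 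If $\bm{0}$ is a regular value, the preimage theorem gives both regimes at once: when $2|mis| \le d$ the intersection is a submanifold of dimension $d - 2|mis|$, while when $2|mis| > d$ no point can carry a surjective differential, so $H^{-1}(\bm{0}) = \emptyset$ — the degenerate case the statement records as dimension $0$.

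It therefore suffices to show that $\bm{0}$ is a regular value of $H$ for almost all $(\phi^{(m)}, \phi^{(m')})$. I would obtain this from Thom's transversality theorem, whose conclusion is exactly that transversality is a generic (residual, hence dense) property in the Whitney $\Ccal^\infty$ topology, matching the topological sense of ``almost all'' used in the paper. To verify its hypotheses in the most elementary way, I would use the parametric transversality theorem with the explicit perturbation family obtained by shifting each imputation function by a constant: set $H_c(x) = H(x) - c$ for $c = (c_1, c_2) \in \RR^{|mis|} \times \RR^{|mis|}$, which corresponds to replacing $\phi^{(m)}, \phi^{(m')}$ by $\phi^{(m)} + c_1, \phi^{(m')} + c_2$ and stays inside the admissible class $\Ccal^\infty$. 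Since $\partial H_c / \partial c = -\mathrm{Id}_{2|mis|}$, the extended map $(x,c) \mapsto H_c(x)$ is a submersion, hence transverse to $\cbr{\bm{0}}$; parametric transversality then yields that for almost every $c$ the map $H_c$ has $\bm{0}$ as a regular value. Because this holds on a dense set of perturbations of any starting pair and the regular-value condition is open, the good imputation functions form a generic set.

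A clean point worth emphasising is that the overdetermined regime needs no separate argument: ``$H_c$ transverse to $\cbr{\bm{0}}$'' means every preimage point carries a surjective differential, which is impossible once $2|mis| > d$, so transversality directly forces the intersection to be empty. The intuition for the threshold $d/2$ is visible by splitting coordinates into those observed in both patterns, missing in both, or mixed: the Jacobian of $H$ can reach its maximal row rank $2|mis|$ precisely when there are enough doubly-observed coordinates, which is exactly $|mis| \le d/2$.

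The hard part is the genericity step, not the dimension count. One must choose a perturbation family rich enough to make the evaluation map a submersion (the constant shifts above do this, but one should double-check they suffice and lie in $\Ccal^\infty$), correctly invoke the parametric/Thom transversality machinery, and then upgrade the measure-theoretic ``almost every shift $c$'' to the topological ``almost all imputation functions'' that the paper claims. A secondary care point is purely one of convention: when $|mis| > d/2$ the true generic picture is an \emph{empty} intersection rather than a nonempty $0$-dimensional one, and this is what the statement abbreviates as the ``$0$'' case.
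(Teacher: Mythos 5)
Your proposal is correct in substance but takes a genuinely different route from the paper's. For the dimension count, you stack the two defining submersions into a single map $H = (h^{(m)}, h^{(m')}) : \RR^d \to \RR^{2|mis|}$ and observe that $\bm{0}$ being a regular value of $H$ is exactly transversality of $\Mcal^{(m)}$ and $\Mcal^{(m')}$; the preimage theorem then handles both regimes at once (a submanifold of dimension $d - 2|mis|$ when $2|mis| \le d$, and emptiness when $2|mis| > d$, since $dH_x$ cannot be surjective onto $\RR^{2|mis|}$). The paper instead treats the two regimes with two separate citations: a disjointness result (Proposition~4.2 of the stable-mappings reference) when $|obs| < d/2$, and the transverse-intersection codimension theorem when $|obs| \geq d/2$. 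Your stacked-map argument is more unified and self-contained. For genericity, the paper invokes Thom's transversality theorem with $k=0$, i.e.\ genericity of $\text{graph}(\phi^{(m)}) \pitchfork \Mcal^{(m')}$ for fixed $\phi^{(m')}$, which directly yields a residual subset of $\Ccal^\infty$ in the Whitney topology; you replace this with the elementary parametric transversality theorem applied to the explicit constant-shift family $H_c = H - c$, whose submersion hypothesis is immediate since $\partial H_c / \partial c = -\mathrm{Id}$. Your remark that the ``$0$'' case of the statement really means an \emph{empty} intersection also matches what the paper's proof actually establishes (it proves disjointness and then calls the dimension $0$).

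One wrinkle in your final upgrade from ``almost every shift $c$'' to ``residual set of imputation functions'': openness and density, as you state them, do not hold in the same topology. Constant shifts are small in the weak (compact-open) $\Ccal^\infty$ topology but \emph{not} in the Whitney strong topology, where a neighborhood of $H$ may require errors decaying at infinity; conversely, global transversality over the noncompact domain $\RR^d$ is open in the strong topology but not in the weak one, since weak-topology perturbations are uncontrolled outside compacta. So ``dense set of perturbations plus openness of the regular-value condition'' does not directly give an open dense set. The standard repair is to exhaust $\RR^d$ by compacta $K_n$: for each $n$, the set of pairs $(\phi^{(m)}, \phi^{(m')})$ such that $H \pitchfork \cbr{\bm{0}}$ on $K_n$ is open in the weak topology (transversality on a compact set is open) and dense (your parametric argument), and the good pairs form the countable intersection over $n$, hence a residual subset of the Baire space $\Ccal^\infty$. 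You flagged this step yourself as the hard part, correctly; with this localization your argument is complete, and it is arguably more elementary than the paper's appeal to Thom's theorem.
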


\begin{proof}
    According to Thom Transversality theorem (\citep{Stable_mappings_book}, p.54) with:
    \begin{itemize}
        \item $W = \Mcal^{(m^\prime)}$,
        \item $f = \phi^{(m)}$,
        \item $k = 0$ (note that as stated p.37, $J^0(X, Y) = X \times Y$ and $j^0f(x) = \text{graph}(f)$),
    \end{itemize}
    we have that $\cbr{\phi^{(m)} \in \Ccal^{\infty}(\RR^{|obs|}, \RR^{|mis|})\, |\, \text{graph}(\phi^{(m)}) \pitchfork \Mcal^{(m^\prime)}}$ is a residual subset of $\Ccal^{\infty}(\RR^{|obs|}, \RR^{|mis|})$ in the $\Ccal^\infty$ topology. In other words, the fact that $\text{graph}(\phi^{(m)})$ is transverse to $\Mcal^{(m^\prime)}$ is a generic property. Put differently, almost all functions $\phi^{(m)}$ have their graph transverse to $\Mcal^{(m^\prime)}$. Note that here the notion of \emph{almost all} has to be understood in its topological sense, and not in its measure theory sense.\\
    
    Suppose that $|obs| < \frac{d}{2}$. According to Lemma~\ref{lem:manifold}, $\Mcal^{(m^\prime)}$ is a $|obs|-$dimensional manifold. Moreover we just showed that for almost all $\phi^{(m)}$, $\text{graph}(\phi^{(m)}) \pitchfork \Mcal^{(m^\prime)}$. Applying Proposition 4.2 of \citep{Stable_mappings_book} (p.51) with $W = \Mcal^{(m^\prime)}$ and $f = \text{graph}(\phi^{(m)})$, we obtain that $\Mcal^{(m)}$ and $\Mcal^{(m^\prime)}$ are disjoint, since, by definition, $\Mcal^{(m)}$ is the image of $\text{graph}(\phi^{(m)})$. Consequently, the dimension of their intersection is 0.
    
    Suppose that $|obs| \geq \frac{d}{2}$. According to the theorem p.30 of \citep{GP_book}, since $\Mcal^{(m)}$ and $\Mcal^{(m^\prime)}$ are transverse submanifolds of $\RR^d$, their intersection is again a manifold with $\text{codim}(\Mcal^{(m)} \cap \Mcal^{(m^\prime)}) = \text{codim}(\Mcal^{(m)}) + \text{codim}(\Mcal^{(m^\prime)})$. This implies that $\text{dim}(\Mcal^{(m)} \cap \Mcal^{(m^\prime)}) = 2|obs| - d$.
\end{proof}

\subsection{Proof of Theorem~\ref{th:fbp}}
\label{ss:bayes_optimality}

\fbp*

\begin{proof}
    Let $\phi^{(m)}\in \Ccal^\infty\br{\RR^{|obs(m)|}, \RR^{|mis(m)|}}$ be the imputation function for missing data pattern $m$, and let $\Mcal^{(m)} = \cbr{x \in \RR^d: x_{mis} = \phi^{(m)}(x_{obs})}$. According to Lemma~\ref{lem:manifold}, for all $m$, $\Mcal^{(m)}$ is an $|obs|-$dimensional manifold. $\Mcal^{(m)}$ corresponds to the subspace where all points with missing data pattern $m$ are mapped after imputation.
    
    Let us order missing data patterns according to their number of missing values, with the pattern of all missing entries ranked first and the pattern of all observed entries ranked last. Two patterns with the same number of missing values are ordered arbitrarily. We use $m(i)$ to refer to the missing data pattern ranked in $i^{th}$ position.
    
    Let $g^\star$ be the function defined as follows: for all $i$, 
    \begin{equation*}
        \forall Z=\Phi(\widetilde X) \in \Mcal^{(m(i))} \setminus \bigcup\limits_{m(k) < m(i)}  \Mcal^{(m(k))}, \qquad g^\star(Z) = \tilde f^{\star}(\widetilde X).
    \end{equation*}
    
    For a given missing data pattern $m(i)$, by distributivity of intersections across unions, we have:
    \begin{equation*}
        \Mcal^{(m(i))} \bigcap \br{ \bigcup\limits_{m(k) < m(i)}  \Mcal^{(m(k))}} = \underset{m(k) < m(i)} {\bigcup}\br{\Mcal^{(m(i))}  \bigcap \Mcal^{(m(k))}} 
    \end{equation*}
    
    If $m(k)$ has strictly more missing values than $m(i)$, then by Lemma~\ref{lem:manifold} $\text{dim}(\Mcal^{(m(k))}) <  \text{dim}(\Mcal^{(m(i))})$, and thus $\text{dim}(\Mcal^{(m(k))} \cap \Mcal^{(m(i))}) < \text{dim}(\Mcal^{(m(i))})$. Moreover, If $m(k)$ has the same number of missing values as $m(i)$, then by Lemma~\ref{lem:manifold_intersection}, for almost all imputation functions $\phi^{(m(k))}$ and $\phi^{(m(i))}$, $\text{dim}(\Mcal^{(m(k))} \cap \Mcal^{(m(i))}) < \text{dim}(\Mcal^{(m(i))})$. We conclude that for all $m(k) < m(i)$, $\Mcal^{(m(k))} \cap \Mcal^{(m(i))}$ is a subset of measure zero in $\Mcal^{(m(i))}$. Finally, since a countable union of sets of measure zero has measure zero, we obtain that $\underset{m(k) < m(i)} {\cup}\br{\Mcal^{(m(i))} \cap \Mcal^{(m(k))}}$ has measure zero in $\Mcal^{(m(i))}$.
    
    Let's now compute the risk of $g^\star \circ \Phi$:
    \begin{equation}
        \Rcal(g^\star \circ \Phi) = \sum_{M = m} P(M=m) \int_{X_{obs}} P(X_{obs}| M=m) \br{ \tilde f^\star(\widetilde X) - g^\star \circ \Phi(\widetilde X)}^2
    \end{equation}
    
    For a given missing data pattern $m$, $\Phi(\widetilde X) \in \Mcal^{(m)}$. Moreover, we constructed $g^\star$ such that $g^\star \circ \Phi(\widetilde X) = \tilde f^\star (\widetilde X)$ for all $\Phi(\widetilde X) \in \Mcal^{(m)}$ except on a set that we just showed to be of measure zero for almost all imputation functions. As a result, the function $X_{obs} \mapsto \tilde f^\star(\widetilde X) - g^\star \circ \Phi(\widetilde X)$ is zero almost everywhere for a given $m$, and the function $X_{obs} \mapsto P(X_{obs}| M=m) \br{ \tilde f^\star(\widetilde X) - g^\star \circ \Phi(\widetilde X)}^2$ is also zero almost everywhere. Since the integral of a function that vanishes almost everywhere is equal to 0, we conclude that $\Rcal(g^\star \circ \Phi) = 0$. Since the risk cannot be negative, $g^\star \circ \Phi$ is a minimizer of the risk and thus it is Bayes optimal.
\end{proof}

\subsection{Examples of transverse and nontransverse manifolds in 2D.}
\label{ss:2D_manifolds}
Theorem~\ref{th:fbp} is true for \emph{almost all} imputation functions and not \emph{all} of them. Thus, we can construct examples with particular choices of imputation functions that lead to nontransverse manifolds, and consequently for which Impute-then-Regress procedures are not Bayes optimal. We provide such an example below.

Consider a dataset with points $x \in \mathbb R^2$, and let $a \in \mathbb R$. Let $\Phi^{(0, 1)}_2(x_1) = a*x_1$ be the imputation function for $x_2$ when only $x_1$ is observed. And let $\Phi^{(1, 0)}_1(x_2) = \frac{1}{a}x_2$ be the imputation function for $x_1$ when only $x_2$ is observed. In this particular case shown in Figure~\ref{fig:2d_linear_manifolds} (bottom), the manifolds on which the data with either $x_1$ missing or $x_2$ missing are projected are exactly the same (the same line in the 2D space). Thus they are nontransverse and consequently Theorem~\ref{th:fbp} does not hold.

However according to the Thom transversality theorem, almost all imputation functions will lead to transverse manifolds.

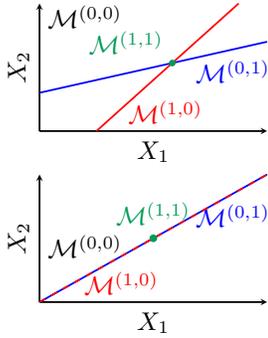
\begin{figure}[h]
    \begin{minipage}{.28\linewidth}
    \scalebox{.21}{\begin{tikzpicture}[scale=2, every node/.style={scale=2.3}]
\begin{axis}[axis lines=left, ticks=none, xlabel shift= 1 pt, ylabel shift = 1 pt,
             xlabel=$X_1$,ylabel=$X_2$, axis line style=ultra thick,
	width=25em, height=16em]
  \addplot[domain=0:4, samples=2, no marks, name path global=imp1,
color=blue, ultra thick] {0.5*x + 0.5} node[below, pos=0.85, text=blue] {$\Mcal^{(0, 1)}$};
  \addplot[domain=1:3.5, samples=2, no marks, name path global=imp2,
color=red, ultra thick] {2*x - 3} node[right, pos=0.15, text=red] {$\Mcal^{(1, 0)}$};
  % \addplot+[mark=*, color=green, fill=green] coordinates{(7/3, 5/3)};
  \fill[ForestGreen, name intersections={of=imp1 and imp2, by={intersect}}]
        (intersect) circle (3pt) node[above left] {$\Mcal^{(1, 1)}$};
  \node at (.8, 3.5) {$\Mcal^{(0, 0)}$};
\end{axis}
\end{tikzpicture}}\\
    \scalebox{.21}{\begin{tikzpicture}[scale=2, every node/.style={scale=2.3}]
\begin{axis}[axis lines=left, ticks=none, xlabel shift= 1 pt, ylabel shift = 1 pt,
             xlabel=$X_1$,ylabel=$X_2$, axis line style=ultra thick,
	width=25em, height=16em]
  \addplot[domain=0:4, samples=2, no marks, name path global=imp1,
color=blue, ultra thick] {2*x} node[below, pos=0.85, text=blue] {$\Mcal^{(0, 1)}$};
  \addplot[domain=0:4, samples=2, no marks, name path global=imp2, dash pattern= on 6pt off 6pt,
color=red, ultra thick] {2*x} node[right, pos=0.15, text=red] {$\Mcal^{(1, 0)}$};

  \node[circle, fill=ForestGreen, inner sep=0pt, minimum size=3pt, label={[ForestGreen]above:$\Mcal^{(1, 1)}$}] (intersect) at (2,4) {};

  \node at (.8, 3.5) {$\Mcal^{(0, 0)}$};
\end{axis}
\end{tikzpicture}}
    \end{minipage}%
        \hfill%
    \begin{minipage}{.65\linewidth}
        \caption{\textbf{Example - Linear imputation manifolds in two dimensions} Manifolds represented for linear imputation functions. $\Mcal^{(0, 0)}$ is the whole plane. Note that $\Mcal^{(1, 1)}$ need not be at the intersection of the two lines, it depends on the imputation function chosen. With linear imputation functions, $\Mcal^{(0, 1)}$ and $\Mcal^{(1, 0)}$ are transverse if and only if the two lines are not coincident.\textbf{Top:} Transverse manifolds. \textbf{Bottom:} Nontransverse manifolds.}
         \label{fig:2d_linear_manifolds}
    \end{minipage}
\end{figure}

\subsection{Proof of Lemma~\ref{lem:lem1}}
\begin{lemma}
\label{lem:lem1}
\begin{equation*}
    \forall X \in \RR^p,\, \forall mis \subseteq \bbr{1, p},\, H(X) \preccurlyeq \bar{H}^+ \implies H_{mis, mis}(X) \preccurlyeq \bar{H}^+_{mis, mis}
\end{equation*}
\end{lemma}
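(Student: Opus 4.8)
The plan is to reduce the statement to two elementary facts about the Loewner order: that it is defined through positive semidefiniteness of a difference, and that a principal submatrix of a positive semidefinite matrix is itself positive semidefinite. Recall first that for symmetric matrices the relation $A \preccurlyeq B$ means by definition that $B - A$ is positive semidefinite, i.e. $v^\top (B - A) v \ge 0$ for every $v \in \RR^p$. Applying this with $A = H(X)$ and $B = \bar H^+$, the hypothesis $H(X) \preccurlyeq \bar H^+$ is exactly the statement that $\bar H^+ - H(X)$ is positive semidefinite.

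Next I would use that extracting the principal submatrix indexed by $mis$ is a linear operation on symmetric matrices, so that
\begin{equation*}
    \br{\bar H^+ - H(X)}_{mis, mis} = \bar H^+_{mis, mis} - H_{mis, mis}(X).
\end{equation*}
Thus it suffices to prove that the principal submatrix $\br{\bar H^+ - H(X)}_{mis, mis}$ is positive semidefinite, since this immediately yields $\bar H^+_{mis, mis} - H_{mis, mis}(X) \succcurlyeq 0$, which is precisely the claimed inequality $H_{mis, mis}(X) \preccurlyeq \bar H^+_{mis, mis}$. I would establish the positive semidefiniteness of a principal submatrix by a test-vector argument: let $E \in \RR^{p \times |mis|}$ be the matrix whose columns are the standard basis vectors $e_j$ of $\RR^p$ for $j \in mis$, so that for $u \in \RR^{|mis|}$ the vector $Eu$ is the embedding of $u$ into $\RR^p$ with zeros outside $mis$. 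For any symmetric $S$ one has the identity $u^\top S_{mis, mis}\, u = (Eu)^\top S\, (Eu)$, because the quadratic form of $S$ evaluated on a vector supported on $mis$ only involves the corresponding principal block. Taking $S = \bar H^+ - H(X) \succcurlyeq 0$ gives $u^\top \br{\bar H^+ - H(X)}_{mis, mis}\, u = (Eu)^\top \br{\bar H^+ - H(X)} (Eu) \ge 0$ for all $u$, as required.

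There is no real obstacle here: the result is a standard fact of matrix analysis, and the entire content is the observation that restricting the quadratic form to vectors supported on $mis$ reproduces the quadratic form of the principal block. The only point requiring care is the bookkeeping identity $u^\top S_{mis, mis}\, u = (Eu)^\top S\, (Eu)$, matching indices between the submatrix and the embedded vector; everything else follows from the definition of $\preccurlyeq$ and linearity of the submatrix map.
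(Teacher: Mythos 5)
Your proof is correct and follows essentially the same route as the paper: both arguments restrict the quadratic form of $\bar{H}^+ - H(X)$ to vectors supported on the $mis$ coordinates (the paper via the subspace $\Vcal = \cbr{v : v_{obs} = 0}$, you via the embedding matrix $E$) to deduce positive semidefiniteness of the principal block. Your version merely makes the bookkeeping identity $u^\top S_{mis,mis}\,u = (Eu)^\top S\,(Eu)$ explicit, which the paper leaves implicit.
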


\begin{proof}
Let $X \in \RR^p$, and let $m$ be a missing data pattern with observed (resp. missing) indices $obs$ (resp. $mis$).
$H(X) \preccurlyeq \bar{H}^+$ is equivalent to:
\begin{equation}
\label{eq:psd}
    \forall u \in \RR^p,\, u^\top \br{\bar{H}^+ - H(X)}u \geq 0.
\end{equation}
Let $\Vcal \subseteq \RR^p$ be a subspace such that for any $v$ in $\Vcal$, $v_{obs}=0$. Since $\Vcal \subseteq \RR^p$, \eqref{eq:psd} implies:
\begin{align*}
    & \forall v \in \Vcal, \, v^\top \br{\bar{H}^+ - H(X)} v \geq 0\\
    \iff & \forall v_{mis} \in \RR^{\abs{mis}}, \, v_{mis}^\top \br{\bar{H}_{mis, mis}^+ - H_{mis, mis}(X)} v_{mis} \geq 0\\
    \iff & H_{mis, mis}(X) \preccurlyeq \bar{H}_{mis, mis}^+
\end{align*}
\end{proof}

\subsection{Proof of Lemma~\ref{le:fimp}}
\label{ss:proof_fimp}

\fimpth*

\begin{proof}
Without loss of generality, suppose that we reorder variables such that we can write $X = (X_{obs}, X_{mis})$. Consider the function
\begin{align*}
    f^\star_{mis}: \RR^{|mis|} & \to \RR\\
    X_{mis} &\mapsto f^\star(X_{obs}, X_{mis})
\end{align*}
Since $f^\star \in \Ccal^2\br{\RR^d, \RR}$, we have $f^\star_{mis} \in \Ccal^2 \br{\RR^{|mis|}, \RR}$. Therefore, we can write the first order Taylor expansion (see Theorem 2.68 in \cite{Folland2002advanced}) of $f^\star_{mis}$ around $E\sqb{X_{mis}|X_{obs}, M}$:
    \begin{align}
        \begin{split}
            \label{eq:Taylor}
            f^\star_{mis}(X_{mis}) =& f^\star(X_{obs}, \E\sqb{X_{mis}|X_{obs}, M})\\
            & + \nabla f^\star_{mis}(X_{obs}, \E\sqb{X_{mis}|X_{obs}, M})^\top \br{X_{mis} - \E\sqb{X_{mis}|X_{obs}, M}}\\
            & + R\br{X_{mis} - \E\sqb{X_{mis}|X_{obs}, M}},
        \end{split}
    \end{align}
    where $R$ is the Lagrange remainder satisfying
    \begin{align*}
        & R\br{X_{mis} - \E\sqb{X_{mis}| X_{obs}, M}} = \\
        & \hspace{4em} \frac{1}{2}\br{X_{mis} - \E\sqb{X_{mis}|X_{obs}, M}}^\top H_{mis, mis}(c) \br{X_{mis} - \E\sqb{X_{mis}|X_{obs}, M}},
    \end{align*}
    for some $c$ in the ball $\Bcal\br{\E\sqb{X_{mis}|X_{obs}, M}, \nm{X_{mis} - \E\sqb{X_{mis}|X_{obs}, M}}_2}$. By assumption, for all $X$, $H(X) \preccurlyeq \bar{H}^+$. Therefore, according to Lemma~\ref{lem:lem1},  we have $H_{mis, mis}(X) \preccurlyeq \bar{H}^+_{mis, mis}$ for any missing data pattern, which leads to:
    \begin{align*}
        & R\br{X_{mis} - \E\sqb{X_{mis}|X_{obs}, M}} \leq\\
        & \hspace{6em} \frac{1}{2}\br{X_{mis} - \E\sqb{X_{mis}|X_{obs}, M}}^\top \bar H^+_{mis, mis} \br{X_{mis} - \E\sqb{X_{mis}|X_{obs}, M}}.
    \end{align*}
    Using equality~\eqref{eq:Taylor}, we get:
    \begin{align*}
        & f^\star(X_{obs}, X_{mis})  - f^\star(X_{obs}, \E\sqb{X_{mis}|X_{obs}, M})\\
        & \hspace{4em} - \nabla f^\star_{mis}(X_{obs}, \E\sqb{X_{mis}|X_{obs}, M})^\top \br{X_{mis} - \E\sqb{X_{mis}|X_{obs}, M}} \\ 
        & \hspace{10em} \leq \frac{1}{2}\br{X_{mis} - \E\sqb{X_{mis}|X_{obs}, M}}^\top \bar H^+_{mis, mis} \br{X_{mis} - \E\sqb{X_{mis}|X_{obs}, M}}
    \end{align*}
    
    Finally, taking the expectation with regards to $P(X_{mis}|X_{obs}, M)$ on both sides, we obtain
    \begin{equation}
    \label{eq:maj}
        \E\sqb{f^\star(X_{obs}, X_{mis})|X_{obs}, M} - f^\star(X_{obs}, \E\sqb{X_{mis}|X_{obs}, M}) \leq \frac{1}{2} tr(H_{mis, mis}^{+ \top} \Sigma_{mis|obs, M}),
    \end{equation}
    where we have used the fact that, for any vector $X \in \RR^d$ and for any $H \in P_d^+$,
    \begin{align*}
     X^\top H X = tr(X^\top H X) = tr( H X X^\top).
    \end{align*}
    
    Following a similar reasoning, we can show that:
    \begin{equation}
    \label{eq:min}
        \E\sqb{f^\star(X_{obs}, X_{mis})|X_{obs}, M} - f^\star(X_{obs}, \E\sqb{X_{mis}|X_{obs}, M}) \geq \frac{1}{2} tr(H_{mis, mis}^{- \top} \Sigma_{mis|obs, M})
    \end{equation}
    
    Together, inequalities~\eqref{eq:maj} and \eqref{eq:min} conclude the proof.
\end{proof}

\subsection{Proof of Proposition~\ref{prop:chain}}
\label{ss:proof_prop_chain}

\propchain*

\begin{proof}
    \begin{align}
        Y - f^\star(X^{CI}) =& (Y - \tilde f^\star(\widetilde X)) + (\tilde f^\star(\widetilde X) - f^\star(X^{CI}))\\
        \br{Y - f(X^{CI})}^2 =& (Y - \tilde f^\star(\widetilde X))^2 + (\tilde f^\star(\widetilde X) - f^\star(X^{CI}))^2\\
        & + 2(Y - \tilde f^\star(\widetilde X))(\tilde f^\star(\widetilde X) - f^\star(X^{CI})\\
        =& (Y - \tilde f^\star(\widetilde X))^2 + (\tilde f^\star(\widetilde X) - f^\star(X^{CI}))^2\\
        & + 2(f^\star(X) - \tilde f^\star(\widetilde X))(\tilde f^\star(\widetilde X) - f^\star(X^{CI})) \label{eq:term2}\\
        & + 2\epsilon(\tilde f^\star(\widetilde X) - f^\star(X^{CI})) \label{eq:term3}\\
        \E\sqb{\br{Y - f^\star(X^{CI})}^2} =& \Rcal^\star + \E \sqb{\br{\tilde f^\star(\widetilde X) - f^\star(X^{CI})}^2} \label{eq:cofimp1}
    \end{align}
    where we used the definition of the Bayes rate. Moreover, term~\eqref{eq:term3} vanishes when taking the expectation w.r.t $\epsilon$ because $\E\sqb{\epsilon | X_{obs}, M} = 0$ and $\epsilon$ in uncorrelated with $X$ or $M$, and term~\eqref{eq:term2} vanishes when taking the expectation w.r.t $X_{mis}|X_{obs}, M$ because by definition $\E_{X_{mis}|X_{obs}, M} \sqb{f^\star(X_{obs}, X_{mis})}= \tilde f^\star(\widetilde X)$.
    
    Clearly, $f^\star \odot \Phi^{CI}$ is Bayes optimal if ans only if:
    \begin{align}
        &\quad \E \sqb{\br{\tilde f^\star(\widetilde X) - f^\star(X^{CI})}^2} = 0\\
        \iff &\quad \sum_{M} \int P(X_{obs}, M)\br{\tilde f^\star(\widetilde X) - f^\star(X^{CI})}^2 dX_{obs} = 0\\
        \iff &\quad \forall M, X_{obs}: P(X_{obs}, M)>0,\; \tilde f^\star(\widetilde X) = f^\star(X^{CI}) \quad \text{almost everywhere}. \label{eq:positive_terms}
    \end{align}
    where equality~\ref{eq:positive_terms} is true since all terms are positive.
    
    Besides, by Lemma~\ref{le:fimp}, we have:
    \begin{equation}
        \frac{1}{2}\text{tr}\left(\bar{H}^-_{mis, mis} \Sigma_{mis|obs, M}\right) \leq \tilde f^\star(\widetilde X) - f^\star(X^{CI}) \leq \frac{1}{2}\text{tr}\left(\bar{H}^+_{mis, mis} \Sigma_{mis|obs, M}\right).
    \end{equation}
    
    By convexity of the square function, it follows that:
    \begin{equation}
        \br{\tilde f^\star(\widetilde X) - f^\star(X^{CI})}^2 \leq \frac{1}{2} \max \br{\text{tr}\br{ \bar{H}^-_{mis, mis} \Sigma_{mis|obs, M}}^2,  \text{tr}\br{\bar{H}^+_{mis, mis} \Sigma_{mis|obs, M}}^2}.
    \end{equation}
    
    Finally, by taking the expectation on both sides:
    \begin{align}
        \begin{split}
            \label{eq:cofimp2}
            &\E \sqb{\br{\tilde f^\star(\widetilde X) - f^\star(X^{CI})}^2} \leq \\
            & \hspace{4em} \frac{1}{2} \E_M \sqb{\max \br{\text{tr}\br{ \bar{H}^-_{mis, mis} \Sigma_{mis|obs, M}}^2,  \text{tr}\br{\bar{H}^+_{mis, mis} \Sigma_{mis|obs, M}}^2}}.
        \end{split}
    \end{align}

    Combining equation~\eqref{eq:cofimp1} with inequality~\eqref{eq:cofimp2} concludes the proof.
\end{proof}

\subsection{Proof of Proposition~\ref{prop:disc}}
\label{ss:proof_prop_disc}

\propdisc*

\begin{proof}
    We will prove this result by contradiction. Suppose that (i) $f^\star \circ \Phi^{CI}$ is not Bayes optimal, (ii) the probability of observing all variables is strictly positive, (iii) there exists a function $g$ continuous such that $g \circ \Phi^{CI}$ is Bayes optimal.
    
    Following a reasoning similar to the one in the proof of proposition~\ref{prop:chain}, we can show that $g \circ \Phi^{CI}$ is Bayes optimal if and only if:
    \begin{equation*}
        \forall M, X_{obs}: P(X_{obs}, M)>0, \quad \E\sqb{f^\star(X)|X_{obs}, M} = g(X^{CI}) \quad \text{almost everywhere}.
    \end{equation*}
    
    In particular since for all $x$, the joint probability $P(M = (0, \dots, 0), X=x)$ of observing all variables is strictly positive, $g$ should satisfy this equality for $M = (0, \dots, 0)$, i.e.:
    \begin{equation*}
        f^\star(X) = g(X) \quad \text{almost everywhere}.
    \end{equation*}
    Since $g$ is continuous, it implies $g = f^\star$.
    Since by assumption, $f^\star$ is not Bayes optimal, then $g$ is not either, which is a contradiction.
\end{proof}

\subsection{Example of a case where no continuous corrected imputation exists.}
\label{ss:no_cc}
Let:
\begin{align*}
    f^\star: \RR^2 &\to \RR\\
    (X_1, X_2) & \mapsto X_2^3 - 3X_2
\end{align*}
and let:
\begin{align*}
    X_2 = X_1 + \epsilon \quad \text{with} \quad & \E \sqb{\epsilon | X_1, M=(0, 1)} = 0\\
    & \E \sqb{\epsilon^2|X_1, M=(0, 1)} = \sigma^2, \, \sigma^2 > 1\\
    & \E \sqb{\epsilon^3|X_1, M=(0, 1)} = 0
\end{align*}

Suppose that $X_2$ is missing. Then the Bayes predictor is given by:
\begin{align*}
    \tilde f^\star(X_1, M=(0, 1)) &= \E \sqb{f^\star(X) | X_1, M=(0, 1)}\\
    &= \E \sqb{X_2^3 - 3X_2| X_1, M=(0, 1)}\\
    &= \E \sqb{\br{X_1 + \epsilon}^3 - 3\br{X_1 + \epsilon} | X_1, M=(0, 1)}\\
    &= \E \sqb{X_1^3 + \epsilon^3 + 3X_1\epsilon^2 + 3X_1^2\epsilon - 3X_1 -3\epsilon) | X_1, M=(0, 1)}\\
    &= X_1^3 + 3X_1(\sigma^2 - 1)
\end{align*}
Clearly, the Bayes predictor for $M = (0, 1)$ is:
\begin{itemize}
    \item continuous,
    \item non-decreasing since $\sigma^2>1$,
    \item $\underset{X1 \to +\infty}{\lim} \tilde f^\star(X_1, M=(0, 1)) = +\infty$ and $\underset{X1 \to -\infty}{\lim} \tilde f^\star(X_1, M=(0, 1)) = -\infty$.
\end{itemize}

\underline{Proof by contradiction}: Suppose that there exists a function $\Phi: \RR \to \RR$ (i) continuous and (ii) such that for all $X_1$, $f^\star(X_1, \Phi(X_1)) = \tilde f^\star(X_1, M=(0, 1))$.

\begin{figure}
    \centering
    \includegraphics[scale=0.65]{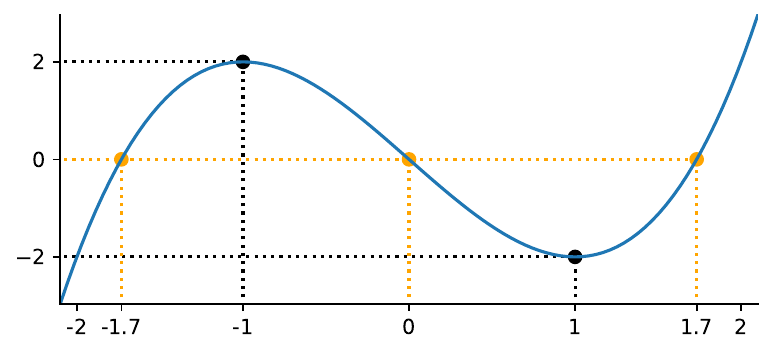}
    \caption{Graph of $X_2 \mapsto f^\star(X_1, X_2)$}
    \label{fig:f_star_cc}
\end{figure}

Let $x_1^+ \in \RR$ such that $\tilde f^\star(X_1 = x_1^+, M=(0, 1)) > 2$. $x_1^+$ exists since $\underset{X1 \to +\infty}{\lim} \tilde f^\star(X_1, M=(0, 1)) = +\infty$. Clearly,
$$ f^\star(x_1^+, X_2) = \tilde f^\star(x_1^+, M=(0, 1)) \iff X_2 = x_2^+ \quad \text{with} \quad x_2^+ > 2.$$

Similarly, let $x_1^- \in \RR$ such that $\tilde f^\star(X_1 = x_1^-, M=(0, 1)) < -2$. $x_1^-$ exists since $\underset{X1 \to -\infty}{\lim} \tilde f^\star(X_1, M=(0, 1)) = -\infty$. Clearly,
$$ f^\star(x_1^-, X_2) = \tilde f^\star(x_1^-, M=(0, 1)) \iff X_2 = x_2^- \quad \text{with} \quad x_2^- < -2.$$

So $\Phi$ must satisfy:
\begin{align*}
    \Phi(x_1^-) &= x_2^- < -2\\
    \Phi(x_1^+) &= x_2^+ > 2
\end{align*}
Note that since the Bayes predictor is non-decreasing, we have $x_1^- < x_1^+$. Since $\Phi$ is continuous, there exists $\check x_1 \in \sqb{x_1^-, x_1^+}$ and $\hat x_1 \in \sqb{x_1^-, x_1^+}$ such that $\check x_1 < \hat x_1$
and $\Phi(\check x_1) = -1$ and $\Phi(\hat x_1) = 1$. It implies that:
$$ f^\star(\check x_1, \Phi(\check x_1)) = f^\star(\check x_1, -1) = 2 > -2 = f^\star(\hat x_1, 1) = f^\star(\hat x_1, \Phi(\hat x_1)).$$
This implies that the function $X_1 \mapsto f^\star(X_1, \Phi(X_1))$ cannot be non-decreasing. Since the Bayes predictor is non-decreasing, the two cannot be equal. CONTRADICTION.

\subsection{Proof of Proposition~\ref{prop:cont_imp}}
\label{app:continuous_imputations}

We start by proving the result for a given missing pattern $m \in \{0,1\}^d$. Take $r \in \{1, \hdots, d-1\}$ and consider a missing pattern $m$ such that $|obs(m)| = r$. 
We let $F : \RR^r \times \RR^{d-r} \to \RR$ defined, for all $(x_{obs}, x_{mis})$ as 
\begin{align}
    F(x_{obs}, x_{mis}) = f^{\star} (x_{obs}, x_{mis}) - \tilde{f}^{\star}(x_{obs}, m).
\end{align}
Our aim is to find, for all $x_{obs}$, a value $x_{mis}$ (depending continuously on $x_{obs}$) satisfying
\begin{align}
 F(x_{obs}, x_{mis}) = 0.   
\end{align}
To this aim, we check the assumptions of Theorem~6 in \cite{arutyunov2019application} for the function $F$. The desired conclusion will follow. 

Since $f^{\star}$ is uniformly continuous and twice continuously differentiable, condition $1-3$ of Theorem~6 in \cite{arutyunov2019application} are satisfied. 
To verify the next condition, we have to prove that there exists $(x_{obs, 0}, x_{mis, 0})$ such that $F(x_{obs, 0}, x_{mis, 0}) =0$. 
Note that this is equivalent to finding $(x_{obs, 0}, x_{mis, 0})$ satisfying 
\begin{align}
    f^{\star} (x_{obs, 0}, x_{mis, 0}) = \tilde{f}^{\star}(x_{obs, 0}, m) = \E \left[ f^{\star}(X)| X_{obs} = x_{obs, 0}, M=m\right], 
    \label{proof:eq_existence_pair}
\end{align}
by definition of the regression function $\tilde{f}^{\star}$. By assumption, the support of $X_{mis} | X_{obs} = x_{obs, 0}, M=m$ is connected. Therefore, the intermediate value theorem can be applied and proves the existence of a pair $(x_{obs, 0}, x_{mis, 0})$ satisfying equation~\eqref{proof:eq_existence_pair}. Finally, by assumption, the regularity condition (GR1) in \cite{arutyunov2019application} is satisfied. This proves that there exists a continuous mapping $\phi^{(m)}: \RR^r \to \RR^{d-r}$ such that 
\begin{align}
    F(x_{obs}, \phi^{(m)}(x_{obs})) = 0.   
\end{align}
The previous reasoning holds for all missing patterns $m$, such that $|mis(m)| \geq 1$. Besides the result is clear for $r=0$ since the imputation function is reduced to a constant in this case (no components of $X$ are observed). On the contrary, in the case where all covariates are observed ($r=d$), no imputation function is needed. Therefore, the result holds for all $0 \leq r \leq d$, which concludes the proof.

\clearpage
\section{Additional results}

\subsection{NeuMiss+MLP architecture}
\label{ss:NeuMiss_archi}
\begin{figure}[h]
    \centering
    \scalebox{.85}{\begin{tikzpicture}[thick, >=triangle 45, scale=0.8]

    \def\learncolor{black!7}
    \tikzset{%
        learnable/.style = {
            fill=\learncolor,
        },
        parameter/.style = {
            draw, rectangle,
            align=center,
           rounded corners,
           learnable
        },
        layer/.style = {
            parameter,
        },
        mu/.style = {
            draw, rectangle,
            align=center,
           fill=white,
           rounded corners,
           learnable
        },
        sum/.style = {
            draw, circle,
            inner sep=0pt
        },
        % Define arrow style
        pil/.style={
               ->,
               thick,
              % shorten <=2pt,
               %shorten >=2pt,}
        }
    }

    \def\interlayer{.8em}
    \def\intermask{2em}

    % Define the nodes
    \node(X_obs){$x \odot \bar m$};
    \node[right=1em of X_obs, sum] (add0) {\Large$-$};
    \node[above=2.5em of add0, mu] (mu_obs){$\mu \odot \bar m$};
    \node[right=1.5em of add0, layer] (layer0) {$W^{(0)}$};
    \node[right=\intermask of layer0, sum] (add1) {\Large$+$};
    \node[right=\interlayer of add1, layer] (layer1) {$W^{(0)}$};
    \node[right=\intermask of layer1, sum] (add2) {\Large$+$};
    \node[right=\interlayer of add2, layer] (layer2) {$W^{(0)}$};
    \node[right=\intermask of layer2, sum] (add3) {\Large$+$};
    \node[right=\interlayer of add3, layer] (layer3) {MLP};
    \node[right=1.5em of layer3] (Y) {$Y$};
    
    % Draw the connections
    \draw[pil] (X_obs) -- (add0) -- (layer0) coordinate[midway] (rect1) 
                  -- (add1) coordinate[pos=.3] (mask0) -- (layer1)
                  -- (add2) coordinate[pos=.3] (mask1) -- (layer2)
                  -- (add3) coordinate[pos=.3] (mask2) -- (layer3)
                  -- (Y);
    \draw (mu_obs) -- (add0);

    % Mock nodes to draw residual connections
    \coordinate[right=1em of add0] (mock){};
    \coordinate[below=2em of mock] (start){};
    \draw (mock) |- (start);
    \foreach \i in {1, 2, 3}{
        \draw (start) -| (add\i) coordinate[midway] (start) {};
    
    }
    
    % then draw the masks
    \def\height{2em}
    \def\width{.5em}
    \def\nrect{10}
    \def\mask{0, 1, 2}
    \foreach \k in \mask{
    
        \def\blackidx{0, 2, 5, 6, 9}
        \def\masklabel{$\odot \bar m$}

        \draw (mask\k.west)
            -- ++(0, \height)
            -- ++(\width, 0) node[midway, above] {\masklabel}
            -- ++(0, -2*\height)
            -- ++(-\width, 0) node (anchor\k) {}
            -- cycle;
    
        \foreach \i in {1,...,\nrect}{
            \draw ($ (anchor\k.center) + (0, 2*\i*\height/\nrect)$)
                -- ++(\width, 0);
        }
        
        \foreach \i in \blackidx{
            \fill[black] ($ (anchor\k.center) + (0, 2*\i*\height /\nrect)$)
                -- ++(0, 2*\height/\nrect)
                -- ++(\width, 0)
                -- ++(0, -2*\height/\nrect)
                -- cycle;
        
        }
    }
    
    % Mock nodes to draw the box
    % \coordinate[right=.5em of mock.center] (rect1){};
    \coordinate[right=.2em of add3] (rect2){};
    \draw[dashed]
        (rect1) |-($(rect1) +(3em, 3.7em)$) -- ($(rect2) + (-3em, 3.7em)$)
        -| (rect2) |-($(rect2) -(4em, 4em)$) -- ($(rect1) - (-4em, 4em)$)
            node[below, pos=.8] (desc) {Neumann block}
        -| cycle;
        
    % Arrows non-linearity
    \node[right=12em of desc, orange] (labelNL) {Non-linearity};
    
    \foreach \k in \mask{
        \draw[orange,->] (labelNL.west) to[bend left=10] (anchor\k);
    }

\end{tikzpicture}}
    \caption{\textbf{(Non-linear) NeuMiss+MLP network architecture with a Neumann block of depth 3} --- $\bar{m} = 1-m$. MLP stands for a standard multi-layer perceptron with ReLU activations.}
    \label{fig:NeuMiss}
\end{figure}
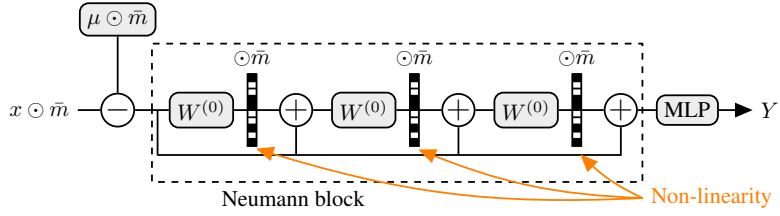

% \paragraph{Initialization strategy for the Neumann block} \citep{LeMorvan2020NeuMiss} designed the Neumann block so that the network may be able to approximate well the conditional expectation $\E \sqb{X_{mis}|X_{obs}, M}$. In particular, if the weights of the Neumann block are chosen equal to certain quantities depending on $\mu$ and $\Sigma$ (the mean and covariance matrix of the data distribution), then the network computes an approximation of the conditional expectation whose error tends to zero exponentially fast with the depth of the Neumann block. Since $\mu$ and $\Sigma$ are unknown in practice, the weights of the network cannot be set to the desired quantities and are instead learned by gradient descent.

% In our experiments, we computed estimates $\hat \mu$ and $\hat \Sigma$ as the sample mean and sample covariance matrix over the observed entries only, i.e, if variable $j$ is observed in $n_j$ samples, then $\hat \mu_j$ is computed as the mean over these $n_j$ values only. Similarly, the covariance between variables $i$ and $j$ is estimated using only the samples for which both are observed. Based on these estimates, we initialized the weights of the network to their theoretical values given in \cite{LeMorvan2020NeuMiss}:
% \begin{align*}
%     \mu &= \hat \mu\\
%     W^{(0)} &= Id - \frac{2}{\hat L} \hat \Sigma
% \end{align*}
% where $\hat L$ is the largest eigenvalue of $\Sigma$.

% We use a pytorch-based \citep{paszke2019pytorch} implementation of NeuMiss as well as the MLP with which we combine it.

\subsection{Expressions of $f^\star_{bowl}$, $f^\star_{wave}$ and $f^\star_{break}$ and the corresponding Bayes predictors.}
\label{ss:f_star}

\paragraph{Expressions of $f^\star_{bowl}$, $f^\star_{wave}$ and $f^\star_{break}$.}
The functions $f^\star$ used in the experimental study are defined as:
\begin{align*}
    f^\star_{bowl}(X) &= \br{\beta^\top X + \beta_0 -1}^2\\
    f^\star_{wave}(X) &= (\beta^\top X +\beta_0 -1) + \sum_{(a_i, b_i) \in S} a_i\,\Phi\br{\gamma \br{\beta^\top X + \beta_0 + b_i}}\\
    f^\star_{break}(X) &= \br{\beta^\top X + \beta_0} + 3 \times \mathds{1}_{\beta^\top X + \beta_0 > 1}
\end{align*}
where $\Phi$ the standard Gaussian cdf, $\gamma = 20 \sqrt{\frac{\pi}{8}}$ and $S = \cbr{(2, -0.8), (-4, -1), (2, -1.2)}$. $\beta$ is chosen as a vector of ones rescaled so that $\text{var}(\beta^\top X) = 1$. These functions are depicted in Figure~\ref{fig:f_star}.

\paragraph{Expressions of the Bayes predictors.}
The expressions of the corresponding Bayes predictors are given by:
\begin{align}
    \tilde f^\star_{bowl}(\widetilde X) &= \E \sqb{f^\star_{bowl}(X) | X_{obs}, M}\\
    &= \br{\beta_{obs}^\top X_{obs} + \beta_{mis}^\top \mu_{mis|obs, M} + \beta_0 - 1}^2 + \beta_{mis}^\top \Sigma_{mis|obs, M} \beta_{mis}\\[1em]
    \tilde f^\star_{wave}(\widetilde X) &= \E \sqb{f^\star_{wave}(X) | X_{obs}, M}\\
    &= \beta_{obs}^\top X_{obs} + \beta_{mis}^\top \mu_{mis|obs, M} + \beta_0 - 1\\
    & \quad + \sum_{(a_i, b_i) \in S} a_i\,\Phi\br{\frac{\beta_{obs}^\top X_{obs} + \beta_{mis}^\top \mu_{mis|obs, M} + \beta_0 + b_i}{\sqrt{1/\gamma^2 + \beta_{mis}^\top \Sigma_{mis|obs, M} \beta_{mis}}}}\\[1em]
    \tilde f^\star_{break}(\widetilde X) &= \E \sqb{f^\star_{break}(X) | X_{obs}, M}\\
    &= \beta_{obs}^\top X_{obs} + \beta_{mis}^\top \mu_{mis|obs, M} + \beta_0 + 3\br{1-\Phi\br{\frac{1-\mu_{mis|obs, M}}{\beta_{mis}^\top \Sigma_{mis|obs, M} \beta_{mis}}}}
\end{align}
with $\mu_{mis|obs, M}$ and $\Sigma_{mis|obs, M}$ the mean and covariance matrix of the conditional distribution $P(X_{mis}|X_{obs}, M)$. Below, we give the expression of these parameters for the MCAR and Gaussian self-masking missing data mechanisms. Let $\mu_{mis|obs}$ and $\Sigma_{mis|obs}$ the mean and covariance matrix of the conditional distribution $P(X_{mis}|X_{obs})$. Since the data is generated according to a multivariate Gaussian distribution $\Ncal \br{\mu, \Sigma}$, we have:
\begin{align*}
    \mu_{mis|obs} &= \mu_{mis} + \Sigma_{mis|obs} \Sigma_{obs}^{-1}(X_{obs} - \mu_{obs})\\
    \Sigma_{mis|obs} &= \Sigma_{mis, mis} - \Sigma_{mis, obs} \Sigma_{obs}^{-1} \Sigma_{obs, mis}
\end{align*}

In the MCAR case, we simply have $\Sigma_{mis|obs, M} = \Sigma_{mis|obs}$ and $\mu_{mis|obs, M} = \mu_{mis|obs}$. In the Gaussian self-masking case, it has been shown in \cite{LeMorvan2020NeuMiss} that $P(X_{mis}|X_{obs}, M)$ is again Gaussian but with parameters:
\begin{align*}
    \Sigma_{mis|obs, M} = \br{D_{mis, mis}^{-1} + \Sigma_{mis|obs}^{-1}}^{-1} \\
    \mu_{mis|obs, M} = \Sigma_{mis|obs, M} \br{D_{mis, mis}^{-1}\tmu_{mis} + \Sigma_{mis|obs}^{-1}\mu_{mis|obs}}
\end{align*}
where $\tilde \mu$ and $D$ are parameters of the Gaussian self-masking missing data mechanism. Finally, we detail below the derivations to obtain the expression of the Bayes predictors.

\paragraph{Derivation of the Bayes predictor for $f^\star_{bowl}$.}
\begin{align}
    f^\star_{bowl}(X) &= \br{\beta^\top X + \beta_0 -1}^2\\
    &= \br{\beta_{obs}^\top X_{obs} + \beta_{mis}^\top X_{mis} + \beta_0 - 1}^2\\
    &= \br{\beta_{obs}^\top X_{obs} + \beta_{mis}^\top (X_{mis}-\mu_{mis|obs, M}) + \beta_{mis}^\top \mu_{mis|obs, M} + \beta_0 - 1}^2\\
    &= \br{\beta_{obs}^\top X_{obs} + \beta_{mis}^\top \mu_{mis|obs, M} + \beta_0 - 1}^2 + \br{\beta_{mis}^\top (X_{mis}-\mu_{mis|obs, M})}^2\\
    & \quad + 2\beta_{mis}^\top (X_{mis}-\mu_{mis|obs, M})\br{\beta_{obs}^\top X_{obs} + \beta_{mis}^\top \mu_{mis|obs, M} + \beta_0 - 1}
\end{align}
Now taking the expectation with regards to $P(X_{mis}|X_{obs}, M)$, the last term vanishes and we get:
\begin{equation}
    \E \sqb{f^\star_{bowl}(X) | X_{obs}, M} = \br{\beta_{obs}^\top X_{obs} + \beta_{mis}^\top \mu_{mis|obs, M} + \beta_0 - 1}^2 + \beta_{mis}^\top \Sigma_{mis|obs, M} \beta_{mis}
\end{equation}

\paragraph{Derivation of the Bayes predictor for $f^\star_{wave}$.}
\begin{align}
    f^\star_{wave}(X) &= (\beta^\top X +\beta_0 -1) + \sum_{(a_i, b_i) \in S} a_i\,\Phi\br{\gamma \br{\beta^\top X + \beta_0 + b_i}}\\
    &= (\beta_{obs}^\top X_{obs} + \beta_{mis}^\top X_{mis} +\beta_0 -1)\\
    & \quad + \sum_{(a_i, b_i) \in S} a_i\,\Phi\br{\gamma \br{\beta_{obs}^\top X_{obs} + \beta_{mis}^\top X_{mis} + \beta_0 + b_i}}
\end{align}
Define $T^{(m)} = \beta_{mis}^\top X_{mis}$. Since $P(X_{mis}|X_{obs}, M)$ is Gaussian in both the MCAR and Gaussian self-masking cases, $P(T^{(m)}|X_{obs}, M)$ is also Gaussian with mean and variance given by:
\begin{align}
    \mu_{T^{(m)|X_{obs, M}}} &= \beta_{mis}^\top \mu_{mis|obs, M}\\
    \sigma^2_{T^{(m)|X_{obs, M}}} &= \beta_{mis}^\top  \Sigma_{mis|obs, M} \beta_{mis}
\end{align}
To compute the Bayes predictor, we now need to compute the quantity:
\begin{equation*}
    \E_{T^{(m)}|X_{obs, M}} \sqb{\Phi\br{\gamma \br{\beta_{obs}^\top X_{obs} + T^{(m)} + \beta_0 + b_i}}}
\end{equation*}
This expectation can then be computed following \citep{Bishop} (section 4.5.2) which gives the result.

\paragraph{Derivation of the Bayes predictor for $f^\star_{break}$.}
\begin{align}
    f^\star_{break}(X) &= \br{\beta^\top X + \beta_0} + 3 \times \mathds{1}_{\beta^\top X + \beta_0 > 1}\\
    \E \sqb{f^\star_{break}(X) | X_{obs}, M} &= \beta_{obs}^\top X_{obs} + \beta_{mis}^\top \mu_{mis|obs, M} + \beta_0\\
    & \quad + 3 \times \int P(X_{mis}|X_{obs}, M) \mathds{1}_{\beta_{obs}^\top X_{obs} + \beta_{mis}^\top X_{mis} + \beta_0 > 1} dX_{mis}
\end{align}

Let $U^{(m)} = \beta_{obs} X_{obs} + \beta_{mis} X_{mis} + \beta_0$. Since $P(X_{mis}|X_{obs}, M)$ is Gaussian in both the MCAR and Gaussian self-masking cases, $P(U^{(m)}|X_{obs}, M)$ is also Gaussian with mean and variance given by:
\begin{align}
    \mu_{U^{(m)|X_{obs, M}}} &= \beta_{obs}^\top X_{obs} + \beta_{mis}^\top \mu_{mis|obs, M} +\beta_0\\
    \sigma^2_{U^{(m)|X_{obs, M}}} &= \beta_{mis}^\top  \Sigma_{mis|obs, M} \beta_{mis}
\end{align}
Using the law of the unconscious statistician, we get:
\begin{align}
    \E \sqb{f^\star_{break}(X) | X_{obs}, M} &= \beta_{obs}^\top X_{obs} + \beta_{mis}^\top \mu_{mis|obs, M} + \beta_0\\
    & \quad + 3 \times \int P(U^{(m)}|X_{obs}, M) \mathds{1}_{U^{(m)} > 1} dU^{(m)}\\
    &= \beta_{obs}^\top X_{obs} + \beta_{mis}^\top \mu_{mis|obs, M} + \beta_0\\
    & \quad + 3 \times \sqb{1- \mathbb{P}\br{U^{(m)} \leq 1 | X_{obs}, M}}\\
    &= \beta_{obs}^\top X_{obs} + \beta_{mis}^\top \mu_{mis|obs, M} + \beta_0\\
    & \quad + 3 \times \sqb{1- \Phi_{U^{(m)}|X_{obs}, M} (1)}
\end{align}

\subsection{Supplementary experiments with $f^*_{break}$.}
\label{ss:boxplots_discontinuous_linear}
\begin{figure}[h]
    \centering
    \includegraphics[scale=0.6]{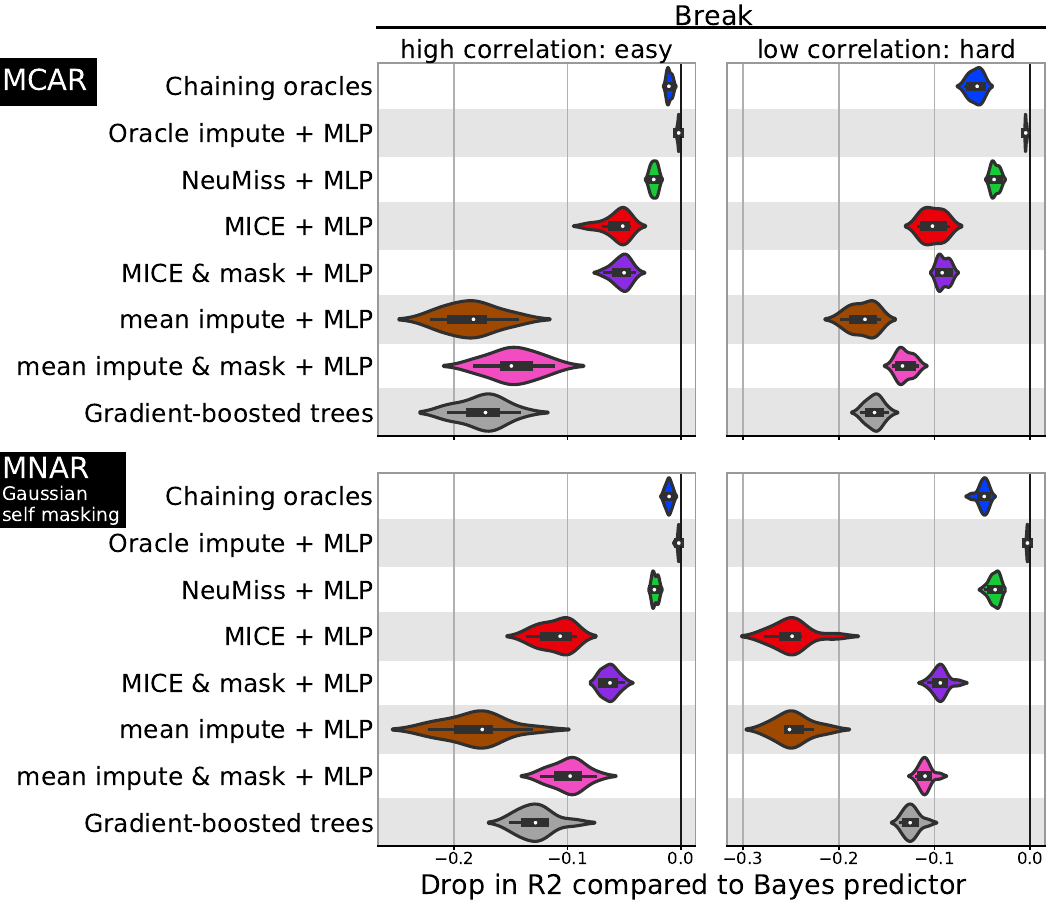}
    \caption{Performances (R2 score on a test set) compared to that of the Bayes predictor across 10 repeated experiments.}
    \label{fig:boxplots_discontinuous_linear}
\end{figure}

\end{document}